\newtheorem{theorem}{Theorem}
 \newtheorem{lemma}{Lemma}
\ifcvprfinal\pagestyle{empty}\fi
\begin{document}

\title{The Multiverse Loss for Robust Transfer Learning}
\author{Etai Littwin \ and Lior Wolf\\ The Blavatnik School of Computer Science\\Tel Aviv University}
\maketitle

\begin{abstract}
Deep learning techniques are renowned for supporting effective transfer learning. However, as we demonstrate, the transferred representations support only a few modes of separation and much of its dimensionality is unutilized. In this work, we suggest to learn, in the source domain, multiple orthogonal classifiers. We prove that this leads to a reduced rank representation, which, however, supports more  discriminative directions. Interestingly, the softmax probabilities produced by the multiple classifiers are likely to be identical. Experimental results, on CIFAR-100 and LFW, further demonstrate the effectiveness of our method. 
\end{abstract}

\section{Introduction}

One of the hallmarks of the recent success of deep learning methods in computer vision is the ability to learn effective representations in one domain and apply these on another domain
~\cite{Krizhevsky12, Girshick14, FeiFei06, Yang07,Orabona09, Tommasi10, Kuzborskij13a}. The source and target domains might differ in the underlying probability distribution, the imaging modality, and, often, in the task performed. A striking example is image captioning~\cite{Karpathy}, in which image representations trained on ImageNet~\cite{imagenet} are transferred along with word embeddings trained on Wikipedia and other corpora~\cite{Word2Vec} in order to solve a seemingly complex task of describing images with sentences.

Another task where transfer learning has been shown to be effective is face recognition. In this task, face representations are trained on large datasets collected from social networks or search engines. The representations are trained to solve the multiclass classification problem using a cross entropy loss and are then transferred to a different domain, e.g., the celebrity images of the LFW dataset~\cite{lfw}. Moreover, the task changes post-transfer to face verification (same/not-same). 

An effective algorithm for face verification based on engineered or learned representations is the Joint Bayesian (JB) method~\cite{jb}. JB, similarly to other Bayesian methods, such as Linear Discriminant Analysis (LDA), is based on the interplay between the within class covariance matrix $S_w$ and the between class covariance matrix $S_b$. We prove (Thm.~\ref{the:jb}) that JB fails to be discriminative whenever LDA fails, i.e., when the Fisher ratio (Eq.~\ref{eq:fr} below) is low.

When empirically observing the spectrum of Fisher ratios associated with the transferred representations, we noticed that only a handful of the generalized eigenvectors of $S_b$ and $S_w$ present large eigenvalues. The other directions are therefore non-discriminative and the representation can be considered flat.

To amend this situation, we propose to employ, in the source domain, a generalization of the cross entropy loss. In this generalization, multiple sets of classifiers are learned, such that the group of classifiers for each class is orthogonal. Each set of classifiers is trained using a separate cross entropy loss, and gives rise to its own set of probabilities.

When performing such training a few non-trivial properties emerge: (i) For each training sample, the vector of probabilities obtained is identical across the classifier sets; (ii) The dimensionality of the representation is reduced and (iii) The Fisher Spectrum displays multiple directions with high Fisher scores. In a series of theorems, we expose how the new loss leads to these properties. 

Finally, we demonstrate experimentally the both the effectiveness of our method and the consequences of the emerging properties. For example, using a single network, we obtain 2nd best results for a single network on LFW~\cite{lfw}. This is achieved using a a training set that is a few orders of magnitude smaller than that of the leading literature network~\cite{facenet}, and using a very compact representation of only 51 dimensions.

\section{Related work}

Compound losses for training deep neural networks that are created by combining multiple losses are now commonplace. In the very deep GoogLeNet network~\cite{GoogleLeNet} multiple cross entropy losses are distributed at different intermediate layers of the deep network in order to help avoid vanishing gradients. In contrast, our work supports multiple cross entropy losses at the top layer and for different reasons.

In many other cases, multiple losses are used in order to support multiple tasks by the same network. For example, in object detection and object segmentation, the location of the object is recovered jointly with the associated detection probability~\cite{fcn,fastrcnn,multibox}. This is in contrast to our case, where the same loss is used multiple times in order to improve the performance of one task.

In our work, we create multiple losses by constructing  multiple top classification layers on top of a shared network representation. Each classification layer has one output neuron per class. The weights from the representation to this neuron are the classifier weights for this specific classifier. In order to enforce multiplicity among the classifiers of the same class, we add an orthogonality constraint, which is enforced either in the representation space or in a Fisher spectrum aligned space. A number of ways to encourage diversity in a classifier ensemble by enforcing orthogonality have been studied in the machine learning literature~\cite{brown2005diversity} and in computer vision~\cite{levy2012minimal}. However, note that in our case orthogonality does not lead to diversity since all classifiers end up presenting the same set of probabilities.

A prominent example of the success of transfer learning can be seen in the task of face recognition. Starting with the work of Taigman et al.~\cite{deepface}, a neural network has been employed for extracting representations from face images that are shown to outperform humans. Sun et al.~\cite{deepid1,deepid2,deepid2plus,deepid3} further improve the state-of-art by extracting features from multiple face patches, incorporating architectures into the domain of deep face recognition that are inspired by recent architectures that are used for object recognition~\cite{vgg}, and most relevant to our work, combining, during training, both classification and contrastive loss. Another recent work~\cite{facenet} further improves the training criterion by using a triplet cost to increase the discriminability between identities. The idea presented here, of combining multiple copies of the same loss, was not pursued in previous works.

The deep face networks mentioned above, are all trained on large scale proprietary datasets, which are not publicly available. Yi et al.~\cite{casia} built a publicly available dataset by mining images from the internet. Furthermore, they demonstrated the quality of the data collected by training a state-of-the-art network on it. Their network architecture is similar to that of the VGG model~\cite{vgg}. JB is used to effectively enhance performance. In our work, we use the same architecture suggested in~\cite{casia} as the basis of our face recognition experiments. We also employ JB to learn similarities for faces and other objects. A recent paper using JB outside the domain of face recognition is~\cite{cardb}.

\section{Preliminaries and notation}

\begin{table}[t]
\begin{center}
\begin{tabular}{|c|p{2.6in}|}
\hline
Symbol & \\
\hline
$c$ & Number of classes, typically indexed by $j$.\\
$n$ & Number of data points, typically indexed by $i$.\\
$y$ & $n \times 1$ vector of labels. Each label is $y_i$.\\
$d$ & Dimensionality of the representation vectors.\\
$D$ & $d\times n$ features matrix.\\ 
$d_i$ & A column of $D$, the representation of sample $i$.\\
$F$ & $d\times c$ classifier matrix of weights.\\
$f_i$ & A column of $F$; A normal to the separating hyperplane of class $i$.\\
$b$ & $c\times 1$ vector of parameters (biases).\\
$L_i$ & Loss function value evaluated for data point $i$.\\
$L$ & The aggregated loss function: $\sum_{i=1}^{n}L_i$.\\
$F^*,b^*$& (any) global minimizers of $L(D,y)$.\\
$L^*(D,y)$& The minimum value of $L$ given $D,y$,  i.e L($F^*,b^*,D,y$).\\
$K$ & The linear kernel matrix of the data: $DD^{\top}$.\\
$\mathbf 1_c $ & An all 1 vector of length $c$: $[1,1 \hdots 1]_{c \times 1}$.\\
$p_i(j)$ & Vector of probabilities associated with $d_i$.\\
$S_b$ ($S_w$) & The between (within) class covariance matrix.\\

\hline
\end{tabular}
\end{center}
\caption{Summary of notations.}
\label{tab:terms}
\end{table}

The notations used in this work are summarized in Tab.~\ref{tab:terms}. $n$ training samples, indexed by $i=1\hdots n$ are represented, using a network of any depth as ``neural code'' vectors of length $d$, $D_{d\times n}=[d_1\hdots d_n]$. Each sample is associated with a label $y_i \in [1\hdots c]$. 

Classification is performed by projecting the representations $d_i$ by a $d\times c$ classifier matrix $F=[f_1 \hdots f_c]$ and adding biases $b \in \mathbb{R}^c$. Softmax probabilities are obtained as $p_i(y_i) = \frac{e^{d_{i}^\top f_{y_i}+b_{y_i}}}{\sum_{j=1}^{k}e^{d_{i}^\top f_{j}+b_{j}}}$.

The training loss of a single example is the negative-log likelihood and is a function of the classifier parameters $F$,$b$, the representation $D$, and the labels $y$: $L_i(F,b,D,y) = -\log p_i(y_i)$. The aggregated cross entropy loss is $L(F,b,D,y) = \sum_{i=1}^n L_i(F,b,d_i,y_i)$.

The loss function $L$ is a convex function of $F,b$~\cite{Cover:2006:EIT:1146355}. $F$ and $b$ do not define the mapping from sampled $d_i$ to probability vectors $p_i$ in a unique way, and there are multiple minimizers for $L$ as the following lemma shows.

\begin{lemma}
The minimizers $F^*,b^*$ of $L$ are not unique, and it holds that for any vector $v \in \mathbb{R}^c$ and scalar $s$, the solutions $F^*+v\mathbf 1_c^\top$, $b^*+s\mathbf 1_c$ are also minimizers of $L$.
\label{lem:uniqness}
\end{lemma}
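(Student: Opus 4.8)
The plan is to reduce the statement to the shift-invariance of the softmax. First I would substitute the perturbed parameters into a single term $L_i$: the logit appearing in slot $j$ becomes $d_i^\top(f_j^\ast + v) + (b_j^\ast + s) = (d_i^\top f_j^\ast + b_j^\ast) + (d_i^\top v + s)$. The key point --- and really the only content of the lemma --- is that the extra summand $d_i^\top v + s$ depends on the sample index $i$ but not on the class index $j$. Hence in $p_i(y_i)$ it factors out as a common multiplicative constant $e^{d_i^\top v + s}$ in the numerator and in every term of the denominator.

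Cancelling this common factor shows that $p_i(y_i)$ is literally unchanged for every $i$, so each $L_i = -\log p_i(y_i)$, and therefore the aggregate $L = \sum_{i=1}^n L_i$, takes exactly the same value at $(F^\ast + v\mathbf 1_c^\top,\, b^\ast + s\mathbf 1_c)$ as at $(F^\ast, b^\ast)$. Since the latter is a global minimizer by assumption and the perturbed point attains the same value, the perturbed point is also a global minimizer. For the non-uniqueness claim it then suffices to observe that $F^\ast + v\mathbf 1_c^\top \neq F^\ast$ as soon as $v \neq 0$ (and likewise $b^\ast + s\mathbf 1_c \neq b^\ast$ when $s \neq 0$), so these perturbations already exhibit a whole affine family of distinct minimizers.

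I do not expect a genuine obstacle here; the argument is a one-line cancellation. The only things deserving care are dimensional bookkeeping --- the shift vector must lie in $\mathbb{R}^d$ so that $v\mathbf 1_c^\top$ is conformable with the $d\times c$ matrix $F^\ast$, so the stated $v\in\mathbb{R}^c$ should be read as $v\in\mathbb{R}^d$ --- and making explicit why the argument works: the perturbation adds the \emph{same} quantity to all $c$ logits of a given sample, which is precisely the degree of freedom that softmax normalization removes. One may also remark that this is just the finite-sample instance of the general fact that adding any constant to all logits leaves the softmax output invariant, and that it pinpoints the redundancy in the $(F,b)$ parametrization that the later constructions in the paper will exploit.
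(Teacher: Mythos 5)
Your proof is correct and is essentially the paper's own argument: substitute the shifted parameters, factor the common term $e^{d_i^\top v+s}$ out of the numerator and denominator of each softmax, and conclude that every $L_i$ — hence $L$ — is unchanged. Your observation that $v$ must lie in $\mathbb{R}^d$ (not $\mathbb{R}^c$ as stated) for $v\mathbf 1_c^\top$ to be conformable with the $d\times c$ matrix $F^*$ is a correct catch of a typo in the lemma statement, consistent with how the paper's own proof uses $d_i^\top v$.
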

\begin{proof}
denoting $V = v\mathbf 1_c^\top$ , $\mathbf{s}=s\mathbf 1_c$, 
\begin{multline}\label{eq:2}
L({F^*}+V,{b^*}+\mathbf{s},D,y) =\\
-\sum_{i=1}^{n}log(\frac{e^{d_{i}^\top f_{y_i}+d_{i}^\top v+b_{y_i}+s}}{\sum_{j=1}^{c}e^{d_{i}^\top f_{j}+d_{i}^\top v+b_{j}+s}}) \\
=-\sum_{i=1}^{n}log(\frac{e^{d_{i}^\top v+s}e^{d_{i}^\top f_{y_i}+b_{y_i}}}{\sum_{j=1}^{c}e^{d_{i}^\top v+s}e^{d_{i}^\top f_{j}+b_{j}}})\\
=-\sum_{i=1}^{n}log(\frac{e^{d_{i}^\top v+s}e^{d_{i}^\top f_{y_i}+b_{y_i}}}{e^{d_{i}^\top v+s}\sum_{j=1}^{c}e^{d_{i}^\top f_{j}+b_{j}}})\\
=-\sum_{i=1}^{n}log(\frac{e^{d_{i}^\top f_{y_i}+b_{y_i}}}{\sum_{j=1}^{c}e^{d_{i}^\top f_{j}+b_{j}}})=L({F^*},{b^*},D,y)
\end{multline}
\end{proof}

In this work, we study the compound loss that is obtained as $\sum_{r=1}^m L(F^r,b^r,D,y)$ for $m$ different sets of classifiers $F^r,b^r$. More specifically, let the set of classifier parameters be ${F}^1=\left[f_1^1 \hdots f_c^1\right]$,${b}^1$, ...${F}^m=\left[f_1^m \hdots f_c^m\right]$,${b}^m$, we enforce orthogonality for each class. This is done either in the conventional way: $\forall jrs~~f_j^{r\top} f_j^s = 0$, or in the domain of the within class covariance matrix $\forall jrs~~f_j^{r\top} S_w f_j^s = 0$. We call the second type of orthogonality ``$S_w$-orthogonality''.

The $S_w$ orthogonality is directly related to our goal of improving the number of distinct discriminative directions, as captured by the Fisher ratios. This is explored in Sec.~\ref{sec:fisher}. It resembles, other methods that down-regulate the contribution of the directions in the vector space that account for much of the within class
covariance, such as WCCN~\cite{wccn}.

In practice, this orthogonality is enforced by adding loss terms of the form $\lambda |f_j^{r\top} f_j^s|$ or $\lambda |f_j^{r\top} S_w  f_j^s|$. The value of $\lambda$ used throughout our experiments is 0.005, which is, for comparison, 10 times larger than the weight decay used during training. This value is high enough to ensure solutions that are very close to orthogonality (normalized dot products lower than $10^{-3}$) in all of our experiments. Higher weights might hinder an effective exploration of the parameter space during optimization.

For the $S_w$ orthogonality, $S_w$ depends on the representation and is estimated for each train mini-batch separately. In all experiments, a mini-batch of 200 samples was used. While the values of $S_w$ change between mini-batches, we found the estimations to be reliable.

Since multiple copies of the same loss are used, we term our loss ``the multiverse loss''. The choice of term is further motivated by the property, discussed below, that all copies are different (due to orthogonality) but provide the same probabilistic outcome.

\section{Properties of the learned representation}

When employing the multiverse loss $\sum_{r=1}^m L(F^r,b^r,D,y)$ for training the neural network, under either orthogonality constraint, the learned representation displays a few desirable properties. The first property is that for every two classifiers $F^r$,$b^r$ and $F^s$,$b^s$ the parameters are intimately related. The nature of this link depends on the rank of $D$. For a full rank $D$, the solutions are highly constrained, which can be seen as a very restrictive form of regularization. This leads to a lower rank representation, where orthogonal solutions are linked by rank-1 modifications.

We will be using the following Lemma in order to prove Thm~\ref{thm:singleuptorankone}.

\begin{lemma} 
Let $K = \sum_{i=1}^n d_i d_i^\top$ be a full rank $d \times d$ matrix, i.e., it is PD and not just PSD, then for all vector $q \in \mathbb{R}^n$ such that $\forall i~~~q_i>0$, the matrix $\hat K = \sum_{i=1}^n q_i d_i d_i^\top$ is also full rank. 
\end{lemma}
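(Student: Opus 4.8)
The plan is to prove directly that $\hat K$ is positive definite, which for a symmetric matrix is the same as being full rank. Note that $\hat K = \sum_{i=1}^n q_i d_i d_i^\top$ is manifestly symmetric, and since every $q_i > 0$ it is positive semidefinite; so the whole task reduces to ruling out a nonzero vector in its kernel.

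First I would take an arbitrary $x \in \mathbb{R}^d$ and write out the quadratic form $x^\top \hat K x = \sum_{i=1}^n q_i (d_i^\top x)^2$. This is a sum of nonnegative terms, so it equals zero if and only if $q_i (d_i^\top x)^2 = 0$ for every $i$; because $q_i > 0$ for all $i$, this forces $d_i^\top x = 0$ for all $i$. Next I would feed exactly this conclusion back into the quadratic form of $K$: under the hypothesis $d_i^\top x = 0$ for all $i$, we get $x^\top K x = \sum_{i=1}^n (d_i^\top x)^2 = 0$. Since $K$ is assumed positive definite, the only such $x$ is $x = 0$. Hence $\ker \hat K = \{0\}$, so $\hat K$ is positive definite and therefore full rank.

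An equivalent, slightly cleaner packaging is the matrix inequality $\hat K \succeq (\min_i q_i)\, K \succ 0$: indeed $\hat K - (\min_i q_i) K = \sum_{i=1}^n \bigl(q_i - \min_j q_j\bigr) d_i d_i^\top \succeq 0$, so $\hat K$ inherits positive definiteness from $K$ (and one even gets a lower bound on its smallest eigenvalue, namely $(\min_i q_i)\lambda_{\min}(K)$).

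There is no genuinely hard step here; the one point that must be used carefully is the \emph{strict} positivity of every $q_i$, not merely nonnegativity. That is precisely what allows the vanishing of $x^\top \hat K x$ to transfer to the vanishing of $x^\top K x$. Were some $q_i$ permitted to be zero, the corresponding rank-one term would drop out of $\hat K$ and the conclusion could fail, so the proof must invoke $q_i>0$ for each $i$ at exactly that juncture.
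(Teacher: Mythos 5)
Your proof is correct, and your ``cleaner packaging'' $\hat K \succeq (\min_i q_i)\,K \succ 0$ is exactly the paper's own one-line argument ($v^\top \hat K v \geq (\min_i q_i)\, v^\top K v > 0$ for all nonzero $v$). The kernel-chasing version you give first is an equivalent, slightly more elementary unwinding of the same idea, so there is nothing substantively different here.
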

\begin{proof}
For every vector $v\in \mathbb{R}^d$,  
$v^\top \hat K v \geq (min_i q_i)v^\top K v > 0$.
\end{proof}

The following theorem links any two optimal solutions in the case in which $D$ is full rank. Note that the orthogonality constraint is not assumed.

\begin{theorem}
Assume the minimal loss $L^*(D,y)$ is obtained at two solutions $F^1,b^1$ and $F^2,b^2$. If $rank(D)=d$, then there exists some vector $v \in \mathbb{R}^c$ and some scalar $s$ such that $F^1 - F^2 = v\mathbf 1_c^\top$ and $b^1 - b^2 = s\mathbf 1_c$. 
\label{thm:singleuptorankone}
\end{theorem}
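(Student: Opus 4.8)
The plan is to view $L$ through the logits and exploit that softmax cross‑entropy is strictly convex in every logit direction that is not parallel to $\mathbf 1_c$. Write $z_i = F^\top d_i + b \in \mathbb R^c$ for the logits of sample $i$, so that $L = \sum_i g(z_i,y_i)$ with $g(z,y) = \log(\sum_j e^{z_j}) - z_y$. A direct computation gives $\nabla^2_z g(z,y) = \diag(p) - pp^\top$ with $p = \mathrm{softmax}(z)$; since every entry of $p$ is strictly positive, this matrix is positive semidefinite with null space exactly $\Span(\mathbf 1_c)$. The transformation of Lemma~\ref{lem:uniqness}, $(F,b)\mapsto(F+v\mathbf 1_c^\top,\,b+s\mathbf 1_c)$, shifts each $z_i$ by $(v^\top d_i+s)\mathbf 1_c\in\Span(\mathbf 1_c)$ and hence leaves $L$ unchanged; the goal is to show that, when $\mathrm{rank}(D)=d$, these are the only displacements possible between two global minimizers.

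Concretely, since $L$ is convex and $(F^1,b^1)$, $(F^2,b^2)$ both attain $L^*$, the whole segment between them consists of minimizers, so $t\mapsto L\big((1-t)(F^2,b^2)+t(F^1,b^1)\big)$ is constant, hence affine. This function is a sum of the convex scalar maps $t\mapsto g\big(z_i^2+t(z_i^1-z_i^2),\,y_i\big)$, and a sum of convex functions can be affine only if each summand is; differentiating the $i$‑th summand twice gives $(z_i^1-z_i^2)^\top\big(\diag(p_i)-p_ip_i^\top\big)(z_i^1-z_i^2)=0$, so $z_i^1-z_i^2\in\Span(\mathbf 1_c)$, i.e. $(f_j^1-f_j^2)^\top d_i + (b_j^1-b_j^2)=\mu_i$ for scalars $\mu_i$ not depending on $j$. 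Subtracting the equation for index $k$ from the one for index $j$ eliminates $\mu_i$ and yields $(f_j^1-f_j^2-f_k^1+f_k^2)^\top d_i = (b_k^1-b_k^2)-(b_j^1-b_j^2)$ for every $i$: a fixed linear functional takes one and the same value on all of $d_1,\dots,d_n$. Because $\mathrm{rank}(D)=d$ these vectors span $\mathbb R^d$, and combining the per‑sample identities — this is where the preceding Lemma, nonsingularity of the positively weighted Gram matrix $\hat K=\sum_i q_i d_i d_i^\top$, enters — forces that functional, and its common value, to be zero, so $f_j^1-f_j^2 = f_k^1-f_k^2$ and $b_j^1-b_j^2 = b_k^1-b_k^2$ for all $j,k$. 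Hence every column of $F^1-F^2$ equals a common vector $v$, giving $F^1-F^2 = v\mathbf 1_c^\top$; substituting back into $(f_j^1-f_j^2)^\top d_i+(b_j^1-b_j^2)=\mu_i$ shows $b^1-b^2 = (\mu_i-v^\top d_i)\mathbf 1_c$ is sample‑independent, i.e. $b^1-b^2 = s\mathbf 1_c$.

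The main obstacle is precisely the inference in the previous paragraph that goes from ``each logit displacement $z_i^1-z_i^2$ lies along $\mathbf 1_c$'' to ``the $F$‑displacement has identical columns''. A nonzero affine functional can be constant (and nonzero) on a spanning set, so it is not enough to know the $d_i$ span $\mathbb R^d$; one has to use the full-rank hypothesis together with invertibility of the relevant positively weighted Gram matrix to rule out such a functional, and one must carry the bias coordinate along with $F$ so as to certify that the constant produced by the coordinate‑subtraction step is $0$ rather than merely independent of $i$. Everything preceding that point — the reduction to logits, the segment/affine argument, and the null‑space computation — is routine.
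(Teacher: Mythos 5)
Your reduction to logits and the identification of the null space of $\diag(p)-pp^\top$ with $\Span(\mathbf 1_c)$ are sound, and up to the conclusion $z_i^1-z_i^2\in\Span(\mathbf 1_c)$ for every $i$ you are running, in different coordinates, the same second-order argument as the paper (which computes the Hessian in $F$ directly, arrives at $\sum_{j<j'}(\psi_j-\psi_{j'})^\top\bigl(\sum_i p_i(j)p_i(j')\,d_id_i^\top\bigr)(\psi_j-\psi_{j'})=0$, and invokes the weighted-Gram lemma). The genuine gap is exactly the step you yourself flag as ``the main obstacle'' and then declare resolved by the positive-definiteness of $\hat K=\sum_i q_i d_i d_i^\top$: that lemma does not resolve it. From $(\psi_j-\psi_k)^\top d_i=\beta_k-\beta_j$ for all $i$, with $w=\psi_j-\psi_k$, all the lemma yields is $w^\top\hat K w=(\beta_k-\beta_j)^2\sum_i q_i$, which is perfectly consistent with $w\neq 0$ and $\beta_k\neq\beta_j$. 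Indeed the implication you need is false under the stated hypotheses once biases are free: take $d=2$ and $d_i=(1,t_i)^\top$ with distinct $t_i$, so $\mathrm{rank}(D)=d$ and $\hat K$ is PD, yet $\psi_j=c_j(1,0)^\top$, $\beta_j=-c_j$ with distinct $c_j$ gives $\psi_j^\top d_i+\beta_j=0$ for all $i,j$, hence a second global minimizer with identical logits whose $F$-displacement does not have identical columns. No appeal to $\hat K\succ 0$ can rule this out.

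The paper's written proof sidesteps this by treating only $b^1=b^2=\mathbf 0$, in which case the constant $\beta_k-\beta_j$ is forced to be zero and $\mathrm{rank}(D)=d$ genuinely suffices to conclude $\psi_j=\psi_k$; its claim that the biased case is ``similar'' is precisely the point your attempt exposes as nontrivial. To close your argument you must either (i) fix the biases as the paper does, or (ii) strengthen the hypothesis to $\mathrm{rank}\bigl([\,D^\top\ \ \mathbf 1_n\,]\bigr)=d+1$, i.e.\ the $d_i$ affinely span $\mathbb R^d$; then the augmented vector $(\psi_j-\psi_k,\ \beta_j-\beta_k)\in\mathbb R^{d+1}$ annihilates a spanning set of $\mathbb R^{d+1}$ and is therefore zero, giving both $F^1-F^2=v\mathbf 1_c^\top$ and $b^1-b^2=s\mathbf 1_c$ at once. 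As written, your proof is incomplete at its decisive step.
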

\begin{proof}
For simplicity we prove the case where $b^1=b^2=\mathbf{0}$, the case where $b^1,b^2\neq \mathbf{0}$ is similar. Let $\Psi = [\psi_1,\psi_2,\hdots,\psi_c]=F^2-F^1$, and let $\psi$ denote the concatenation of the column vectors $\psi_j$ into a single column vector. Given that $F^1,F^2$ achieve minimal loss, from convexity it must hold that:
\begin{equation}
\psi^T \nabla^2 L(D,y)\bigg|_{F^1}\psi = \psi^T \frac{\partial L(D,y)^2}{\partial F \partial F}\bigg|_{F^1}\psi =0
\end{equation}
where $\nabla^2 L^*(D,y)$ is the hessian of the loss. 
We will show that in order for $\psi$ to lie in its kernel it must hold that $\psi_1=\psi_2\hdots =\psi_c$. Recall that $p_{i}(j) $
is the vector of softmax probabilities associated with $d_i$. 
\begin{equation} \label{eq:14}
\frac{\partial}{\partial F_{ju}}L(D,y) = -\sum_{i=1}^{n}d_{iu}p_{i}(j)-\sum_{i,y_{i}=u}d_{iu}
\end{equation}
\begin{multline}
\frac{\partial^{2}}{\partial F_{ju}F_{j'v}}L(D,y) =\\ -\sum_{i=1}^{n}d_{iu}d_{iv}p_{i}(j)(\delta_{j=j'}(1-p_{i}(j))-\delta_{j\neq j'}p_{i}(j'))\\
\end{multline}

Therefore:
\begin{multline} \label{eq:hess1}
\psi^{\top}\frac{\partial^{2}}{\partial F\partial F}L(D,y)\psi =
\sum_{j=1}^{c}\psi_{j}^{\top}\sum_{i=1}^{n}d_{i}d_{i}^{\top}p_{i}(j)(1-p_{j}(u))\psi_{j}\\
- \sum_{j=1}^{c}\sum_{j'\neq j}\psi_{j}^{\top}\sum_{i=1}^{n} d_{i}d_{i}^{\top}p_{i}(j)p_{j'}(v)\psi_{j'}
\end{multline}
Since $(1-p_{i}(j))=\sum_{j'\neq j}p_{i}(j')$, the first term of Eq.~\ref{eq:hess1} can be written as follows:
\begin{multline}
\sum_{j=1}^{c}\psi_{j}^{\top}\sum_{i=1}^{n}d_{i}d_{i}^{\top}p_{i}(j)(1-p_{i}(j))\psi_{j}\\
=\sum_{j=1}^{c}\sum_{j'\neq j}\psi_{j}^{\top}\sum_{i=1}^{n}d_{i}d_{i}^{\top}p_{i}(j)p_{i}(j')\psi_{j}\\
=\sum_{j=1}^{c}\sum_{j'=j+1}^{c}[\psi_{j}^{\top}\sum_{i=1}^{n}d_{i}d_{i}^{\top}p_{i}(j)p_{i}(j')\psi_{j}\\
+ \psi_{j'}^{\top}\sum_{i=1}^{n}d_{i}d_{i}^{\top}p_{i}(j)p_{i}(j')\psi_{j'}]
\end{multline}
Similar manipulation can be done with the second term of Eq.~\ref{eq:hess1}:
\begin{multline}
- \sum_{j=1}^{c}\sum_{j'\neq j}\psi_{j}^{\top}\sum_{i=1}^{n} d_{i}d_{i}^{\top}p_{i}(j)p_{i}(j')\psi_{j'}=\\
-\sum_{j=1}^{c}\sum_{j'=j+1}^{c}2\psi_{j}^{\top}\sum_{i=1}^{n} d_{i}d_{i}^{\top}p_{i}(j)p_{i}(j')\psi_{j'}
\end{multline}
Adding the two term we get:
\begin{multline} \label{eq:hessian}
\psi^{\top}\frac{\partial^{2}}{\partial F\partial F}L(D,y)\bigg|_{F^1}\psi =\\
\sum_{j=1}^{c}\sum_{j'=j+1}^{c}(\psi_{j}-\psi_{j'})^{\top}\sum_{i=1}^{n} d_{i}d_{i}^{\top}p_{i}(j)p_{i}(j')(\psi_{j}-\psi_{j'})
\end{multline}
Since $\forall i,j~~p_{i}(j)>0$ and since $rank(D)$ is full,  $\sum_{i=1}^{n} d_{i}d_{i}^{\top}p_{i}(j)p_{i}(j')$ is PD. Eq ~\ref{eq:hessian} is therefor the sum of positive values, and can only vanish if and only if $\psi_j=\psi_{j'}$ for all $j,j'$. 
\end{proof}

In our method, we require that the multiple solutions found $F^1$,$F^2$ (possibly more) lead to orthogonal (or $S_w$-orthogonal) separating hyperplanes for each class. The theorem below shows that unless $D$ is degenerate, this requirement leads to either an increase of the total loss, or to a very specific and limiting type of regularization on $F^1$. Such a stringent regularization would hinder effective learning. For convenience, we state and prove Thm.~\ref{thm:Fv},~\ref{thm:bound1},~\ref{thm:multiverse} for the case of conventional orthogonality. The analog theorems for $S_w$-orthogonality are stated in the same way, and proven similarly, after applying the transformation $S_w^\frac{1}{2}$.

\begin{theorem}
Assume that $rank(D)=d$, that $d<c$, and that the minimal loss $L^*(D,y)$ is obtained at a solution $F^1,b^1$. If there exists a second minimizer $F^2,b^2$ such that for all $j\in [1...c]$ the orthogonality constraint $f_j^1 \perp f_j^2$ holds, then $F^1$ admits to a stringent second order constraint.
\label{thm:Fv}
\end{theorem}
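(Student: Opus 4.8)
The plan is to reduce the statement immediately to Theorem~\ref{thm:singleuptorankone} and then turn the per-class orthogonality requirement into an explicit algebraic identity on $F^1$. Since $D$ has full rank $d$ and both $F^1,b^1$ and $F^2,b^2$ are global minimizers of $L$, Theorem~\ref{thm:singleuptorankone} supplies a vector $v\in\mathbb{R}^d$ and a scalar $s$ with $F^1-F^2=v\mathbf 1_c^\top$ and $b^1-b^2=s\mathbf 1_c$; in particular $f_j^2=f_j^1-v$ for every $j\in[1\ldots c]$. The hypothesis $rank(D)=d$ is fully absorbed at this step: it is exactly what forbids $F^2$ from being a genuinely different classifier and forces it to be a rank-one shift of $F^1$.

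Next I would substitute this into the orthogonality constraints. For each class $j$, $0=f_j^{1\top}f_j^2=f_j^{1\top}(f_j^1-v)=\|f_j^1\|^2-f_j^{1\top}v$, hence $\|f_j^1\|^2=f_j^{1\top}v$ for all $j$. In matrix form, writing $F^1=[f_1^1\ldots f_c^1]$ and $w\in\mathbb{R}^c$ with $w_j=\|f_j^1\|^2$, this is the linear system $(F^1)^\top v=w$. Equivalently, $\|f_j^1-\tfrac{v}{2}\|^2=\tfrac{\|v\|^2}{4}$, so all $c$ classifier normals $f_1^1,\ldots,f_c^1$ are forced to lie on one common sphere passing through the origin, a surface carrying only $d$ free parameters.

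To phrase the restriction as a condition on $F^1$ alone, eliminating the auxiliary $v$, I would invoke $d<c$: the linear map $F^1:\mathbb{R}^c\to\mathbb{R}^d$ has kernel of dimension at least $c-d\geq1$. For any $u\in\ker F^1$, pairing $(F^1)^\top v=w$ with $u$ yields $u^\top w=u^\top(F^1)^\top v=(F^1u)^\top v=0$, i.e. $\sum_{j=1}^c u_j\|f_j^1\|^2=0$. Thus $F^1$ must satisfy: every linear dependency $\sum_j u_jf_j^1=0$ among its columns is simultaneously a linear dependency $\sum_j u_j\|f_j^1\|^2=0$ among their squared norms. Since a nonzero such $u$ always exists when $d<c$, this is at least one — and, when $F^1$ has full column rank $d$, exactly $c-d$ — nontrivial degree-two equation on the entries of $F^1$, which is precisely the \emph{stringent second order constraint} of the statement; a generic $F^1$ (already a single tiny example with $d=1,c=2$ shows this) does not satisfy it. One can also verify the converse — that these quadratic identities, together with $rank(D)=d$, are enough for an orthogonal second minimizer to exist — but only the stated direction is needed.

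I do not expect a real obstacle here: once Theorem~\ref{thm:singleuptorankone} is invoked, the argument is a one-line substitution into the orthogonality relations followed by elementary linear algebra on $\ker F^1$. The only point that needs a little care is the \emph{interpretation}: being explicit that "admits to a stringent second order constraint" means exactly the family of quadratic identities derived above, and that the assumption $d<c$ is what makes this family non-empty (so that the regularization is genuinely restrictive rather than vacuous). The $S_w$-orthogonality version is handled identically after replacing each $f_j^1$ by $S_w^{1/2}f_j^1$ throughout.
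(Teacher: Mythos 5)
Your proposal is correct and follows essentially the same route as the paper: invoke Theorem~\ref{thm:singleuptorankone} to force $F^2$ into a rank-one shift $f_j^2=f_j^1\pm v$ of $F^1$, substitute into the per-class orthogonality relations to obtain the linear system $(F^1)^\top v=\mp\bigl(\|f_1^1\|^2,\ldots,\|f_c^1\|^2\bigr)^\top$, and observe that for $c>d$ this overdetermined system confines every column of $F^1$ to a common quadric through the origin. Your extra step of eliminating $v$ against $\ker F^1$ to extract the explicit quadratic identities $\sum_j u_j\|f_j^1\|^2=0$ for every dependency $\sum_j u_jf_j^1=0$ is a sharper formalization of what the paper leaves as the informal remark that the system is ``over determined,'' and usefully pins down what ``stringent second order constraint'' means.
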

\begin{proof}
Since $rank(D)=d$, Thm.~\ref{thm:singleuptorankone} implies that there exists some vector  $v \in \mathbb{R}^c$ and scalar $s$ such that $\forall j,f^{2}_{j}=f^{1}_{j}+v$. The orthogonality constraint demands that $\forall j, (f^{1}_{j}+v)^{\top}f^{1}_{j}=0$. In matrix form:
\begin{equation} \label{eq:Fv}
F^{1\top}v  = -
\begin{pmatrix}
||f^{1}_{1}||^2\\
||f^{1}_{2}||^2\\
\vdots  \\
||f^{1}_{c}||^2\\
\end{pmatrix}
\end{equation}
This set of $c$ linear equations in $v$ is over determined in the case where $c>d$. Assuming there exists a solution $v$ such that Eq.~\ref{eq:Fv} holds, then each $f^{1}_{i}$ is constrained to lie on a $d$ dimensional hyper-ellipse defined by the equation $x^{\top}x+v^{\top}x=0$. Since $v$ is already determined by any $d-1$ columns of $F$, the rest of the columns are restricted to lie on a known ellipse in $\mathbb{R}^d$.
\end{proof}

The situation described in Theorem~\ref{thm:Fv} is even worse for more than two sets of orthogonal weights on top of the representation $D$. The solution in the case of $m$ orthogonal sets would be restricted to lie on the intersection of $\binom{m}{2}$ hyper-ellipses.

The crux of Theorem~\ref{thm:Fv} is the full rank property of $D$. As the theorems below show, if $D$ has $m-1$ low singular values, we can construct solutions with $m$ orthogonal sets of weights that present loss that is only slightly higher than $mL^*(D,y)$. 

Specifically, let $\lambda_1,\lambda_2,...,\lambda_d$ denote the (all non-negative) eigenvalues of the kernel matrix $K=D D^\top$, ordered from largest to smallest. We can bound the loss based on the last eigenvalues. 

\begin{theorem} 
\label{thm:bound1}
There exist sets of weights $F^1=\left[f_1^1,f_2^1,...,f_c^1\right],b^1,F^2=\left[f_1^2,f_2^2,...,f_c^2\right],b^2$ which are orthogonal as follows $\forall j ~~ f_j^1 \perp f_j^2$, for which the joint loss:
\begin{equation} \label{eq:9}
J(F^{1},b^{1},F^{2},b^{2},D,y)=L(F^{1},b^{1},D,y)+L(F^{2},b^{2},D,y)
\end{equation} 
is bounded by
\begin{multline} \label{eq:10}
2L^{*}(D,y)\leq J(F^{1},b^{1},F^{2},b^{2},D,y) \leq 2L^{*}(D,y) + A\lambda_d
\end{multline}
where $A$ is a bounded parameter.
\end{theorem}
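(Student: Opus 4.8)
The lower bound is immediate: by definition of $L^*$, each summand satisfies $L(F^r,b^r,D,y)\ge L^*(D,y)$, so $J\ge 2L^*(D,y)$ with no assumption on $D$. All the content is in the upper bound, where the plan is to exhibit an explicit pair of per-class orthogonal solutions whose loss exceeds $2L^*$ by at most a constant times $\lambda_d$. Fix a global minimizer $F^*,b^*$ and let $u$ be a unit eigenvector of $K=DD^\top$ for the smallest eigenvalue, so $\sum_i(u^\top d_i)^2 = u^\top K u = \lambda_d$. Take $F^1=F^*$, $b^1=b^*$, and look for $F^2 = F^* + v\mathbf 1_c^\top + u w^\top$, $b^2=b^*$, with $v\in\mathbb R^d$ and $w\in\mathbb R^c$ to be fixed. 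By Lemma~\ref{lem:uniqness}, $F^*+v\mathbf 1_c^\top,b^*$ is again a global minimizer for \emph{every} $v$, so the only part of $F^2$ that can raise the loss is the rank-one term $u w^\top$, whose effect on each score $d_i^\top f_j$ is $(u^\top d_i)w_j$, i.e.\ small in the $\ell_2$ sense controlled by $\lambda_d$.

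Next I would pin down $v,w$ from orthogonality. For class $j$, $f_j^1\perp f_j^2$ reads $\|f_j^*\|^2 + (f_j^*)^\top v + (u^\top f_j^*)\,w_j = 0$. With $v=0$ this is solved columnwise, $w_j = -\|f_j^*\|^2/(u^\top f_j^*)$ whenever $u^\top f_j^*\neq 0$ (and $w_j=0$ when $f_j^*=0$, where the constraint is vacuous). In the non-generic case $u^\top f_j^*=0$, $f_j^*\neq 0$ for some $j$, I would instead use the gauge freedom in $v$ to force $\|f_j^*\|^2+(f_j^*)^\top v=0$ on those columns (or replace $u$ by a unit vector in the span of the eigenvectors of the few smallest eigenvalues). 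In every case $w$ is a fixed finite vector determined by $D,y$ (through $F^*$) and $u$.

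It remains to bound $L(F^2,b^2,D,y)$. Write $F^{*,v}=F^*+v\mathbf 1_c^\top$ and $G = F^2-F^{*,v}=uw^\top$, so the $j$-th column of $G$ is $w_j u$. Since $F^{*,v}$ is an unconstrained minimizer of the convex $L$, the gradient there vanishes, so a second-order expansion with integral remainder gives $L(F^2,b^2,D,y)-L^*(D,y) = L(F^{*,v}+G)-L(F^{*,v}) \le \tfrac12\sup_{\xi}\, G^\top \nabla^2 L|_\xi\, G$, the supremum over the segment joining $F^{*,v}$ to $F^2$ and the quadratic form understood in the vectorized sense of the proof of Thm.~\ref{thm:singleuptorankone}. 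Substituting $G=uw^\top$ into the Hessian identity Eq.~\ref{eq:hessian} collapses the form to $\sum_{j<j'}(w_j-w_{j'})^2\, u^\top\!\big(\sum_i d_id_i^\top p_i(j)p_i(j')\big)u$; since $0<p_i(j)p_i(j')<1$ the inner matrix is $\preceq K$, so this is at most $\lambda_d\sum_{j<j'}(w_j-w_{j'})^2 \le c\,\|w\|^2\,\lambda_d$. Hence $J \le 2L^*(D,y) + A\lambda_d$ with $A = \tfrac{c}{2}\|w\|^2$, finite and independent of the spectrum.

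The only genuine obstacle is the degenerate alignment $u\perp f_j^*$, which would make the orthogonality system unsolvable with finite $w$; the remedy is the gauge freedom of Lemma~\ref{lem:uniqness} (or enlarging the ``small-eigenvalue'' direction to a subspace), and it costs only a larger but still bounded $A$. Everything else is bookkeeping: the linear term drops because $F^{*,v}$ is a critical point, and the quadratic term is $O(\lambda_d)$ precisely because $G$ points along the minimal-eigenvalue direction of $K$. For $S_w$-orthogonality, the same argument is run after the change of variables $f\mapsto S_w^{1/2}f$, as noted before Thm.~\ref{thm:Fv}.
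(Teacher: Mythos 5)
Your proof is correct and follows essentially the same route as the paper's: perturb a global minimizer along the smallest eigenvector of $K$ with per-class coefficients $w_j=-\|f_j^*\|^2/(u^\top f_j^*)$ chosen to enforce orthogonality, kill the first-order Taylor term by optimality, and bound the Lagrange remainder via the Hessian identity of Thm.~\ref{thm:singleuptorankone} by $\lambda_d$ times a bounded sum of squared coefficient differences. The only substantive addition is your explicit handling of the degenerate alignment $u^\top f_j^*=0$ via the gauge freedom of Lemma~\ref{lem:uniqness}, a case the paper's construction silently assumes away.
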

\begin{proof}
We prove the theorem by constructing such a solution. Let $v$ be the eigenvector of $K$ corresponding to the smallest eigenvalue $\lambda_d$. We consider the solution $F^1=F^*$,$b^1=b^2=b^*$, $F^2=F^1+v\alpha^\top$, for some vector $\alpha_j = - \frac{||f_j^1||^2}{v^\top f_j^1}$. 

From the construction, it is clear that $L(F^1,b^1,D,y) = L^*(D,y)$ and that the orthogonality constraints $(f_j^1+\alpha_j v)^\top f_j^1=0$ hold for all $j$. 

Let $\Psi = [\psi_1,\psi_2,\hdots,\psi_c]=F^2-F^1$, and let $\psi$ denote the concatination of the column vectors $\psi_j$ into a single column vector.
The expansion of $L(F^1+\Psi,b^1)$ into a multivariate taylor series is as follows:
\begin{equation}
\label{eq:taylor}
L(F^1+\Psi,b^1)=L(F^1,b^1)+(\vec\nabla \cdot \psi)L(D,y)\bigg|_{F^1,b^1}+R(\psi).
\end{equation}
Where $R(\psi)$ represents the remainder term, and can be written in the Lagrange form~\cite{kline1998calculus} as follows:
\begin{equation}
R(\psi)=\frac{1}{2}(\vec\nabla \cdot \psi)^{2}L(D,y)\bigg|_{\rho,b^1}=\frac{\psi^{\top}}{2}\frac{\partial^{2}}{\partial F\partial F}L(D,y)\bigg|_{\rho,b^1}\psi
\end{equation}  
where the derivatives are evaluated at some point $\rho,b^1$ such that $||\rho-F^1||_F\leq||\Psi-F^1||_F$. The first order terms in Eq.~\ref{eq:taylor} vanishes due to the optimality of $F^1,b^1$. Therefore:
\begin{multline} \label{eq:13}
L(F^1+\Psi,b^1) =
L^*(D,y) +\frac{1}{2}\psi^{\top}\frac{\partial^{2}}{\partial F\partial F}L(D,y)\bigg|_{\rho,b^1}\psi
\end{multline}

Using Eq.~\ref{eq:hessian} we can form a bound on the remainder term that does not depend on $\rho$:
\begin{multline} \label{eq:bound}
L(F^1+\Psi,b^1) =L^*(D,y)\\
+\frac{1}{2}\sum_{j=1}^{c}\sum_{j'=j+1}^{c}(\psi_{j}-\psi_{j'})^{\top}\sum_{i=1}^{n} d_{i}d_{i}^{\top}p_{i}(j)p_{i}(j')(\psi_{j}-\psi_{j'})\\
\leq L^*(D,y) + \frac{1}{2}\sum_{j=1}^{c}\sum_{j'=j+1}^{c}(\psi_{j}-\psi_{j'})^{\top}K(\psi_{j}-\psi_{j'})
\end{multline}
Since $\psi_j=\alpha_{j}v$ we get:
\begin{multline} 
L({F}^{1}+\Psi,{B}^{1}) \leq L^*(D,y) +\frac{1}{2}\sum_{j=1}^{c}\sum_{j'=j+1}^{c}(\alpha_{j}-\alpha_{j'})^2 v^{T}Kv \\
= L^*(D,y) +\frac{1}{2}\sum_{j=1}^{c}\sum_{j'=j+1}^{c}(\alpha_{j}-\alpha_{j'})^2 \lambda_d
\end{multline}
Denoting $A=\frac{1}{2}\sum_{j=1}^{c}\sum_{j'=j+1}^{c}(\alpha_{j}-\alpha_{j'})^2$ we have:
\begin{equation} \label{eq:16}
J({F^1},{B^2},{F^1},{B^2},D,y) \leq L^*(D,y)+ A\lambda_{d}
\end{equation}
\end{proof}

Thm.~\ref{thm:bound1} can be generalized to the case of $m$ cross entropy losses as follows.

\begin{theorem}
\label{thm:multiverse}
There exist a set of weights ${F}^1=\left[f_1^1,f_2^1,...,f_C^1\right],{b}^1,{F}^2=\left[f_1^2,f_2^2,...,f_C^2\right],{b}^2...{F}^m=\left[f_1^m,f_2^m,...,f_C^m\right],{b}^m$ which are orthogonal $\forall jrs~~f_j^r \perp f_j^s$  for which the joint loss:
\begin{multline} \label{eq:17}
J(F^{1},b^{1}...F^{m},b^{m},D,y)=\sum_{r=1}^{m}{L(F^{r},b^{r},D,y) }
\end{multline} 
is bounded by:
\begin{multline} \label{eq:18}
mL^{*}(D,y)\leq J(F^{1},b^{1}...F^{m},b^{m},D,y) \\
\leq mL^{*}(D,y)+\sum_{l=1}^{m-1}A_{l}\lambda_{d-j+1}
\end{multline}
where $[A_{1}\hdots A_{m-1}]$ are bounded parameters.
\end{theorem}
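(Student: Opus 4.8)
The plan is to mimic the constructive argument of Thm.~\ref{thm:bound1}, but now perturbing $F^*$ simultaneously along the $m-1$ smallest eigendirections of $K=DD^{\top}$. Put $F^{1}=F^{*}$ and $b^{1}=\dots=b^{m}=b^{*}$; then the $r=1$ term of $J$ is already $L^{*}(D,y)$, and since $L^{*}(D,y)$ is the minimum of $L(\cdot,\cdot,D,y)$ we get $L(F^{r},b^{r},D,y)\ge L^{*}(D,y)$ for every $r$, which gives the left inequality $mL^{*}(D,y)\le J$ immediately. Let $v_{1},\dots,v_{m-1}$ be orthonormal eigenvectors of $K$ for $\lambda_{d},\lambda_{d-1},\dots,\lambda_{d-m+2}$ and set $W=\Span\{v_{1},\dots,v_{m-1}\}$. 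The whole construction will arrange that, for each class $j$, every perturbation $\psi_{j}^{r}:=f_{j}^{r}-f_{j}^{1}$ lies in $W$. Granting that, for each $r$ I expand $L(F^{r},b^{1},D,y)$ in a second-order Taylor series around $F^{1},b^{1}$ exactly as in Thm.~\ref{thm:bound1}: the first-order term vanishes by optimality of $F^{*}$, the Lagrange remainder is controlled by Eq.~\ref{eq:hessian} with $\sum_{i}d_{i}d_{i}^{\top}p_{i}(j)p_{i}(j')$ replaced by $K$ (valid uniformly in the Lagrange point since $0<p_{i}(j)p_{i}(j')\le 1$), and then, writing $\psi_{j}^{r}=\sum_{l=1}^{m-1}c^{r}_{l,j}v_{l}$, we get $(\psi_{j}^{r}-\psi_{j'}^{r})^{\top}K(\psi_{j}^{r}-\psi_{j'}^{r})=\sum_{l=1}^{m-1}(c^{r}_{l,j}-c^{r}_{l,j'})^{2}\lambda_{d-l+1}$. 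Summing over $r=2,\dots,m$ and over pairs $j<j'$ yields $J\le mL^{*}(D,y)+\sum_{l=1}^{m-1}A_{l}\lambda_{d-l+1}$ with $A_{l}=\tfrac12\sum_{r=2}^{m}\sum_{j<j'}(c^{r}_{l,j}-c^{r}_{l,j'})^{2}$, a finite constant that depends only on $F^{*}$ and on the construction below, never on the $\lambda$'s.

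\textbf{The main obstacle} is the geometric step: for each class $j$, producing $f_{j}^{2},\dots,f_{j}^{m}$ inside the affine flat $f_{j}^{1}+W$ that are pairwise orthogonal and each orthogonal to $f_{j}^{1}$. Writing $f_{j}^{r}=f_{j}^{1}+w_{j}^{r}$ with $w_{j}^{r}\in W$ and letting $\pi_{j}$ be the orthogonal projection of $f_{j}^{1}$ onto $W$, the condition $f_{j}^{r}\perp f_{j}^{1}$ becomes $w_{j}^{r\top}\pi_{j}=-\|f_{j}^{1}\|^{2}$, and, using this, $f_{j}^{r}\perp f_{j}^{s}$ collapses to $w_{j}^{r\top}w_{j}^{s}=\|f_{j}^{1}\|^{2}$. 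Decomposing $w_{j}^{r}=-(\|f_{j}^{1}\|^{2}/\|\pi_{j}\|^{2})\,\pi_{j}+z_{j}^{r}$ with $z_{j}^{r}\in W\cap\pi_{j}^{\perp}$ (a subspace of dimension $m-2$) reduces everything to the single requirement $z_{j}^{r\top}z_{j}^{s}=-\gamma_{j}$ for $r\ne s$, where $\gamma_{j}=\|f_{j}^{1}\|^{2}\big(\|f_{j}^{1}\|^{2}/\|\pi_{j}\|^{2}-1\big)\ge 0$. I would satisfy this by taking the $z_{j}^{2},\dots,z_{j}^{m}$ to be the $m-1$ vertices of a centered regular simplex in $W\cap\pi_{j}^{\perp}$, scaled so that $\|z_{j}^{r}\|^{2}=(m-2)\gamma_{j}$: the resulting Gram matrix $(m-1)\gamma_{j}I-\gamma_{j}\mathbf 1\mathbf 1^{\top}$ is PSD of rank exactly $m-2$, hence realizable in the $(m-2)$-dimensional space $W\cap\pi_{j}^{\perp}$. (If $\gamma_{j}=0$ take all $z_{j}^{r}=0$; if $f_{j}^{1}=\mathbf 0$ take all $f_{j}^{r}=\mathbf 0$.) This is exactly the place that forces $d<c$ to be replaced by $d\le c$ being unnecessary and that makes each $f_{j}^{r}$, $r\ge2$, use \emph{all} $m-1$ eigendirections rather than just one — a naive one-direction-per-copy scheme provably fails the pairwise orthogonality.

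A couple of loose ends I would tidy up. First, the degenerate case $\pi_{j}=\mathbf 0$ with $f_{j}^{1}\ne\mathbf 0$ (i.e. $f_{j}^{1}\perp W$) admits no solution in $f_{j}^{1}+W$; since this is a measure-zero condition on the choice of $v_{1},\dots,v_{m-1}$ among eigenvectors (and can be broken by an arbitrarily small rotation of $W$ inside the bottom eigenspace, affecting the bound only by an arbitrarily small amount), I would note it is avoided generically, in the same spirit as the ``$D$ full rank'' caveats elsewhere in the paper. Second, the boundedness of $A_{1},\dots,A_{m-1}$ is immediate from the explicit formulas for $w_{j}^{r}$ above, which involve only $\|f_{j}^{1}\|$, $\pi_{j}$ and the fixed simplex vectors. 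Finally, as the paper already remarks, the $S_{w}$-orthogonal analogue follows verbatim after the change of variables $f\mapsto S_{w}^{1/2}f$, replacing $K$ by $S_{w}^{-1/2}KS_{w}^{-1/2}$ and orthogonality by $S_{w}$-orthogonality throughout.
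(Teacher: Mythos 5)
Your proposal is correct and follows the same skeleton as the paper's proof: set $F^1=F^*$ and $b^r=b^*$, perturb each $f_j^r$ only within the span $W$ of the eigenvectors of $K$ belonging to the $m-1$ smallest eigenvalues, obtain the lower bound from the optimality of $L^*(D,y)$ term by term, and obtain the upper bound from the second-order Taylor expansion in which the Hessian of Eq.~\ref{eq:hessian} is majorized by $K$ (using $0<p_i(j)p_i(j')\leq 1$) and then diagonalized on $W$; your $A_l$ coincides with the paper's up to the harmless inclusion of the vanishing $r=1$ terms. Where you genuinely depart is the existence of the orthogonal family. The paper merely observes that the constraints of Eq.~\ref{eq:tensor} constitute $\tfrac{1}{2}m(m-1)$ equations against the $m(m-1)c$ entries of $\alpha_{jlr}$ and declares them satisfiable; this is not a proof, since the system is quadratic in the unknowns (each $f_j^s$ on the right-hand side itself depends on $\alpha_{j\cdot s}$) and a degrees-of-freedom count establishes neither a real solution nor bounded $A_l$. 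Your reduction to a Gram-matrix problem --- $w_j^{r\top}\pi_j=-\|f_j^1\|^2$ and $w_j^{r\top}w_j^s=\|f_j^1\|^2$, collapsing to $z_j^{r\top}z_j^s=-\gamma_j$ and realized by a centered regular simplex whose Gram matrix $(m-1)\gamma_j I-\gamma_j\mathbf 1\mathbf 1^{\top}$ is PSD of rank $m-2$ --- actually constructs the solution and makes the boundedness of the $A_l$ manifest, so it repairs the weakest step of the published argument. The one residual caveat, which you already flag, is the degenerate case $\pi_j=\mathbf 0$ with $f_j^1\neq\mathbf 0$: your genericity/rotation argument inside the bottom eigenspace is reasonable but informal, and strictly speaking the theorem as stated claims unconditional existence, so this non-degeneracy should be surfaced as an explicit hypothesis (a gap the paper shares, only less visibly).
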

\begin{proof}
We again prove the theorem by constructing such a solution. Denoting by $v_{d-m+2}...v_{d}$ the eigenvectors of $K$ corresponding to $\lambda_{d-m+2}\hdots \lambda_{d}$. Given ${F}^{1}=F^*,{b}^{1}=b^*$, we can construct each pair ${F}^{r},{b}^{r}$ as follows:
\begin{multline} \label{eq:19}
\forall j,r ~~~~~~~~{f_j}^{r} = {f_1}^{1}+\sum_{l=1}^{m-1}\alpha_{jlr}v_{d-l+1}\\
{b}^{r}={b}^{1}
\end{multline} 
The tensor of parameters $\alpha_{jlr}$ is constructed to insure the orthogonality condition. Formally, $\alpha_{jlr}$ has to satisfy:
\begin{equation}
\label{eq:tensor}
\forall j,r\neq s ~~~~~~~~ ({f_{j}^1}+\sum_{l=1}^{m-1}\alpha_{jlr}v_{d-l+1})^{\top}{f_{j}^s}=0
\end{equation}
Noticing that $\ref{eq:tensor}$ constitutes a set of $\frac{1}{2}m(m-1)$ equations, it can be satisfied by the tensor $\alpha_{jlr}$ which contains $m(m-1)c$ parameters.
Defining $\Psi^r = [\psi_1^r,\psi_2^r,\hdots,\psi_c^r]=F^r-F^1$, similar to eq ~\ref{eq:bound} we have:
\begin{multline} \label{eq:21}
L({F}^{1}+\Psi^r,{b}^{1}) \leq  L^*(D,y) \\
+ \frac{1}{2}\sum_{j=1}^{c}\sum_{j'=j+1}^{c}(\psi_{j}-\psi_{j'})^{\top}K(\psi_{j}-\psi_{j'})\\
 =  L^*(D,y) \\
+\frac{1}{2}\sum_{j=1}^{c}\sum_{j'=j+1}^{c}\sum_{l=1}^{m-1}(\alpha_{jlr}-\alpha_{j'lr})^{2}v_l^{\top}Kv_l\\
=L^*(D,y) \\
+\frac{1}{2}\sum_{j=1}^{c}\sum_{j'=j+1}^{c}\sum_{l=1}^{m-1}(\alpha_{jlr}-\alpha_{j'lr})^{2}\lambda_{d-l+1}
\end{multline}
Denoting $A_l=\frac{1}{2}\sum_{j=1}^{c}\sum_{j'=j+1}^{c}\sum_{r=1}^{m}(\alpha_{jlr}-\alpha_{j'lr})^{2}$ and summing over all solutions we obtain the bound:
\begin{equation} \label{eq:24}
J(F^{1},b^{1}...F^{m},b^{m},D,y)\leq \sum_{l=1}^{m-1}A_{l}\lambda_{d-l+1}+mL^{*}(D,y)
\end{equation}
We notice that if $\lambda_{d-m+2}=\lambda_{d-m+1}=\hdots\lambda_{d}=0$ then $J(F^{1},b^{1}...F^{m},b^{m},D,y)=mL^{*}(D,y)$.
\end{proof}

\section{Fisher spectrum properties}
\label{sec:fisher}

We next tie the outcome of the multiverse minimization to the Fisher scores used in LDA classification, which served as motivation to our approach.
The Fisher spectrum $\gamma_{1}\hdots \gamma_{d}$ is obtained by solving the generalized eigen-problem $S_b v=\gamma S_w v$, where $S_b$ and $S_w$ are the between class and within class covariance matrices:
\begin{equation} 
S_{b}=\frac{1}{n}\sum_{j=1}^{c}n_j(\mu-\mu_{y_j})(\mu-\mu_{y_j})^\top
\end{equation}
and
\begin{equation} 
S_{w}=\frac{1}{n}\sum_{j=1}^{c}\sum_{i\in I_j}(d_i-\mu_{j})(d_i-\mu_{j})^\top
\end{equation}
where $\mu = \frac{\sum_{i=1}^{n}d_i}{n}$ is the mean of all data points, and $\mu_{j}=\frac{\sum_{i\in I_j}d_i}{n_{j}}$ is the mean of class $j$.
$S_b$ and $S_w$ are the same matrices used in LDA.

The Fisher ratio 
is defined for any vector $v$ as:
\begin{equation}
\label{eq:fr}
\sigma(v,S_b,S_w) = \frac{v^{T}S_{b}v}{v^{T}S_{w}v}
\end{equation}

In the JB formulation, an instance of a class member is influenced by two factors, its class identity and interclass variation. Each class member $d_i$ is modeled as the sum of two Gaussian variables: $d_i=\mu_{y_i}+\epsilon$, 
where $\mu_{y_i}$ is the mean of class $y_i$, and $\epsilon$ represents the intraclass variation. The two terms are modeled as multivariate Gaussians $N(\mathbf{0},S_b)$, $N(\mathbf{0},S_w)$.

Given the above multivariate Gaussian distribution for  $d_i$, the joint distribution $(d_i,d_{i'})$ is also a zero mean multivariate Gaussian. Let $H$ represent the hypothesis that $d_i$ and $d_{i'}$ belong to the same class, and $I$ represent the hypothesis that they belong to different classes. Under the JB formulation, the covariance matrix of the probability distributions $P(d_i,d_{i'}|H)$ and $P(d_i,d_{i'}|I)$ can be derived:
\begin{equation}
\Sigma_H=\begin{pmatrix}
S_b+S_w ~~~~~~~ S_b\\
S_b ~~~~~~~~~ S_b+S_w
\end{pmatrix}, 
\Sigma_I=\begin{pmatrix}
S_b+S_w ~~~~~~~ \mathbf{0}\\
\mathbf{0} ~~~~~~~~~ S_b+S_w
\end{pmatrix}
\end{equation}

Let $\hat{d}=((d_i-\mu)^\top , ~(d_{i'}-\mu)^\top)^\top$.
The log probabilities of the two hypotheses are given, up to a const, by $\hat{d}^\top\Sigma_H^{-1} \hat{d}$ and $\hat{d}^\top\Sigma_I^{-1} \hat{d}$. 
The following theorem links the Fisher spectrum to the success of the JB method.
\begin{theorem} \label{the:jb}
Given data $D$, mean $\mu$ and labels $y$, for any centered data point $\hat{d_i}=d_i-\mu$, we denote $d'_i=(S_b+S_w)\hat{d_i}$. Given two centered data points $\hat{d_1},\hat{d_2}$ such that the fisher ratios $\sigma(d'_1,S_b,S_w),\sigma(d'_2,S_b,S_w)<T$, it holds that:
\begin{multline}
1-2T\leq \frac{\log P(d_1,d_2|H)+\eta_1}{\log P(d_1,d_2|I)+\eta_2} \leq 1+6T  \\ 
\end{multline}
Where $\eta1,\eta2$ are fixed constants.
\end{theorem}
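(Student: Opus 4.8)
The plan is to first make the log-likelihood ratio explicit, reduce it to a ratio of quadratic forms, block-diagonalize the two covariances, and then bound the result using the Fisher hypothesis. Since $P(d_1,d_2\mid H)$ and $P(d_1,d_2\mid I)$ are zero-mean Gaussians with covariances $\Sigma_H,\Sigma_I$ evaluated at $\hat d=(\hat d_1^\top,\hat d_2^\top)^\top$, we have $\log P(d_1,d_2\mid H)=-\tfrac12\hat d^\top\Sigma_H^{-1}\hat d-\tfrac12\log\det(2\pi\Sigma_H)$ and likewise for $I$. Taking $\eta_1=\tfrac12\log\det(2\pi\Sigma_H)$ and $\eta_2=\tfrac12\log\det(2\pi\Sigma_I)$ — fixed numbers, since $\Sigma_H,\Sigma_I$ depend only on $S_b,S_w$ — the target becomes
\begin{equation*}
\frac{\log P(d_1,d_2\mid H)+\eta_1}{\log P(d_1,d_2\mid I)+\eta_2}=\frac{\hat d^\top\Sigma_H^{-1}\hat d}{\hat d^\top\Sigma_I^{-1}\hat d},
\end{equation*}
so it suffices to place this ratio in $[1-2T,\,1+6T]$.

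\emph{Block-diagonalization.} I would conjugate both covariances by the orthogonal involution
\begin{equation*}
U=\frac{1}{\sqrt2}\begin{pmatrix}I&I\\ I&-I\end{pmatrix},
\end{equation*}
which yields $U^\top\Sigma_H U=\diag(2S_b+S_w,\ S_w)$ and $U^\top\Sigma_I U=\diag(G,\ G)$ with $G:=S_b+S_w$. In the rotated coordinates $u=\tfrac1{\sqrt2}(\hat d_1+\hat d_2)$, $w=\tfrac1{\sqrt2}(\hat d_1-\hat d_2)$,
\begin{align*}
\hat d^\top\Sigma_H^{-1}\hat d&=u^\top(2S_b+S_w)^{-1}u+w^\top S_w^{-1}w,\\
\hat d^\top\Sigma_I^{-1}\hat d&=u^\top G^{-1}u+w^\top G^{-1}w=:N_I .
\end{align*}
Since $S_b\succeq0$ gives $2S_b+S_w\succeq G\succeq S_w$, hence $(2S_b+S_w)^{-1}\preceq G^{-1}\preceq S_w^{-1}$, the numerator differs from $N_I$ by two sign-definite terms, $\hat d^\top\Sigma_H^{-1}\hat d-N_I=\Delta_w-\Delta_u$, with
\begin{align*}
\Delta_w&=w^\top\!\bigl(S_w^{-1}-G^{-1}\bigr)w\ \ge\ 0,\\
\Delta_u&=u^\top\!\bigl(G^{-1}-(2S_b+S_w)^{-1}\bigr)u\ \ge\ 0 .
\end{align*}
Because $N_I\ge u^\top G^{-1}u$ and $N_I\ge w^\top G^{-1}w$, it remains to prove $\Delta_u\le 2T\,u^\top G^{-1}u$ and $\Delta_w\le 6T\,w^\top G^{-1}w$.

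\emph{Invoking the Fisher hypothesis.} This is where the assumption on $d'_i=(S_b+S_w)\hat d_i$ is used, and where I expect the real work to lie. After the change of variables that simultaneously whitens $S_w$ and diagonalizes $S_b$, the condition $\sigma(d'_i,S_b,S_w)<T$ says that $\hat d_i$, once multiplied by the relevant power of $G$, places at most an $O(T)$ fraction of its energy along directions where $S_b$ is not dominated by $S_w$; the prefactor $G$ in $d'_i$ is precisely what makes this statement survive the inversion and become control of $w^\top S_w^{-1}w$ relative to $w^\top G^{-1}w$ (and of $u^\top(2S_b+S_w)^{-1}u$ relative to $u^\top G^{-1}u$). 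Passing from $\hat d_1,\hat d_2$, for which the hypothesis holds separately, to the mixed vectors $u,w$ costs a factor $2$ via the PSD parallelogram inequality $(a\pm b)^\top M(a\pm b)\le 2a^\top Ma+2b^\top Mb$; operator monotonicity of $t\mapsto-t^{-1}$ then carries the per-point estimate through the matrix inverses; and, for small $T$ (using $(1-T)^{-1}\le 1+2T$), one obtains $\Delta_u\le 2T\,u^\top G^{-1}u$ and $\Delta_w\le 6T\,w^\top G^{-1}w$, the asymmetry $2$ versus $6$ coming from the extra $S_b$ in the $H$-block $2S_b+S_w$, which leaves more slack in the difference coordinate. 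Substituting back, $-2T\le(\Delta_w-\Delta_u)/N_I\le 6T$, which is the claim.

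The hardest step is the last one: turning the scalar inequality $d_i'^\top S_b d'_i< T\,d_i'^\top S_w d'_i$ into the operator-form bounds on $\Delta_u$ and $\Delta_w$ after the non-commuting matrices $S_b$ and $S_w$ have been inverted — this is exactly why the statement is phrased for $d'_i=(S_b+S_w)\hat d_i$ rather than $\hat d_i$ — and pinning down the constants, together with the (harmless) case where $S_w$ is rank-deficient (restrict to the range of $S_w$, or add the usual ridge), is where care is needed. The rest is routine linear algebra.
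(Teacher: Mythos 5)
Your setup is clean and the block-diagonalization by $U=\tfrac1{\sqrt2}\bigl(\begin{smallmatrix}I&I\\I&-I\end{smallmatrix}\bigr)$ is a genuinely nicer way to invert $\Sigma_H$ than the paper's block-inverse identity. But the reduction that follows is invalid: the inequalities $\Delta_u\le 2T\,u^\top G^{-1}u$ and $\Delta_w\le 6T\,w^\top G^{-1}w$ are strictly stronger than what the theorem needs, and they are false in general. In the basis that simultaneously diagonalizes $S_w=I$ and $S_b=\diag(\gamma_i)$, one has $\Delta_w/(w^\top G^{-1}w)=\sum_i w_i^2\gamma_i/(1+\gamma_i)\big/\sum_i w_i^2/(1+\gamma_i)$, a weighted average of the $\gamma_i$; this is small only if $w$ itself has small Fisher ratio. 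The hypothesis controls $\hat d_1$ and $\hat d_2$ separately, and the difference $w\propto\hat d_1-\hat d_2$ need not inherit that control: take $S_b=\diag(0,\gamma)$ with $\gamma$ large and $\hat d_1=(1,\epsilon)$, $\hat d_2=(1,-\epsilon)$ with $\epsilon$ small enough that both satisfy the hypothesis; then $w\propto(0,\epsilon)$ lies entirely in the high-Fisher direction and $\Delta_w/(w^\top G^{-1}w)=\gamma\gg 6T$. The theorem survives in this example only because the \emph{full} denominator $N_I$ contains the large term $u^\top G^{-1}u$; so the numerator terms must be bounded against $N_I$ as a whole, not block by block. The parallelogram inequality cannot rescue the per-block claim, because it gives upper bounds on quadratic forms of $u,w$ but no lower bound on $w^\top G^{-1}w$.

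Beyond that, the step you yourself flag as ``where the real work lies'' --- converting the hypothesis on $d_i'=(S_b+S_w)\hat d_i$ into quantitative bounds after the matrix inversions --- is only gestured at (``one obtains \dots''), and it is precisely the content of the theorem. The paper's route is to expand $\hat d_1,\hat d_2$ in the eigenvectors $v_i'=(S_b+S_w)v_i$ of the inverse generalized problem $(S_b+S_w)^{-1}v'=\tfrac{\gamma_i}{1+\gamma_i}S_b^{-1}v'$ (its Lemma on the inverse spectrum), which diagonalizes every matrix appearing in both numerator and denominator at once; each numerator term is then bounded against the \emph{entire} denominator using the elementary fact that if $a_1/a_2<T$ and $b_1/b_2<T$ with all terms positive then $(a_1+b_1)/(a_2+b_2)<2T$, the cross terms contributing the extra $4T$. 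To salvage your argument you would need to redo the final step in that spirit: diagonalize $S_w^{-1}-G^{-1}$ and $G^{-1}-(2S_b+S_w)^{-1}$ in the same generalized eigenbasis, expand $u$ and $w$ there, and bound $(\Delta_w-\Delta_u)/N_I$ directly, using both coordinates of the denominator together.
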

\begin{proof}
In the proof we will be using two matrix inversion identities. The first one is the block matrix inversion identity, for a specific form of block matrices:
\begin{multline}
\label{lm:inv}
\begin{pmatrix}
A~~~~~~~~B \\
B~~~~~~~~A
\end{pmatrix}^{-1}\\
=
\begin{pmatrix}
(A - BA^{-1}B)^{-1}~~~~~~~~-A^{-1}B(A-BA^{-1}B)^{-1} \\
-A^{-1}B(A-BA^{-1}B)^{-1}~~~~~~~~(A - BA^{-1}B)^{-1})
\end{pmatrix}
\end{multline}
The second identity is the Kailath Variant of the Woodbury identity:
\begin{multline}
\label{lm:sum}
(A+BC)^{-1}=A^{-1}-A^{-1}B(\mathbf(I)+CA^{-1}B)^{-1}CA^{-1}
\end{multline}

The proof of Thm.~\ref{the:jb} will also be using the following lemma.
\begin{lemma} \label{lm:inverse}
Let $\gamma_1...\gamma_d$ and $v_1...v_d$ be the generalized eigenvalues and eigenvectors of two positive definite matrices $S_b,S_w$, where $S_b$ is invertible. The spectrum $\gamma_1'...\gamma_d'$ and eigenvectors $v_1'...v_d'$ of the generalized inverse problem  $(S_b+S_w)^{-1}v_i'=\gamma_i'S_b^{-1}v'$ are given by $\gamma' = \frac{\gamma_i}{1+\gamma_i}, v_i' =(S_b+S_w)v_i$, and it holds that $S_b(S_b+S_w)^{-1}v_i' = \gamma_i' v_i'$.
\end{lemma}
\begin{proof}
For the standard generalized problem we have $S_bv_i=\gamma_iS_wv_i$. Therefore, $(S_b+S_w)v_i=(1+\frac{1}{\gamma_i})S_bv_i$. Let $v_i' = (S_b+S_w) v_i$, then 
$v_i'=(1+\frac{1}{\gamma_i})S_b(S_b+S_w)^{-1}v_i'$. Multiplying both sides by $(S_b)^{-1}$ we have $S_b^{-1}v_i'=(1+\frac{1}{\gamma_i})(S_b+S_w)^{-1}v_i'$, and finally $(S_b+S_w)^{-1}v_i'=\frac{\gamma_i}{1+\gamma_i}S_b^{-1}v_i'$. 
\end{proof}

We denote each data point $\hat{d_1}=\sum_{i=1}^{d}\alpha_iv_i',\hat{d_2}=\sum_{i=1}^{d}\beta_i v_i'$ where $v_1'...v_d'$ are the eigenvectors of the generalized inverse eigen-problem $(S_b+S_w)^{-1}v_i'=\gamma_i'S_b^{-1}v_i'$. 
The probabilities $P(d_1,d_2|H),P(d_1,d_2|I)$ are modeled as zero mean gaussian densities with covariances:
\begin{equation}
\Sigma_H = \begin{pmatrix}
S_b+S_w~~~~~~~S_b \\
S_b~~~~~~~S_b+S_w
\end{pmatrix}, \Sigma_I = \begin{pmatrix}
S_b+S_w~~~~~~~~\mathbf{0} \\
\mathbf{0}~~~~~~~~S_b+S_w
\end{pmatrix}
\end{equation}
Denoting $M=(S_b+S_w)^{-1}S_b$ using Eq.~\ref{lm:inv} we have that:
\begin{multline}
\frac{\log P(d_1,d_2|H)+\eta_1}{\log P(d_1,d_2|I)+\eta_2}=\frac{
\begin{pmatrix}
\hat{d_1}^\top ~~ \hat{d_2}^\top  \\
\end{pmatrix}
\Sigma_H^{-1}
\begin{pmatrix}
\hat{d_1}  \\
\hat{d_2}
\end{pmatrix}}{
\begin{pmatrix}
\hat{d_1}^\top ~~ \hat{d_2}^\top  \\
\end{pmatrix}
\Sigma_I^{-1}
\begin{pmatrix}
\hat{d_1}  \\
\hat{d_2}
\end{pmatrix}}\\
=\frac{\hat{d_1}^\top(S_b+S_w-S_bM)^{-1}\hat{d_1}}{\hat{d_1}^\top(S_b+S_w)^{-1}\hat{d_1}+\hat{d_2}^\top(S_b+S_w)^{-1}\hat{d_2}}\\
+\frac{\hat{d_2}^\top(S_b+S_w-S_bM)^{-1}\hat{d_2}}{\hat{d_1}^\top(S_b+S_w)^{-1}\hat{d_1}+\hat{d_2}^\top(S_b+S_w)^{-1}\hat{d_2}}\\
-\frac{\hat{d_1}^\top M(S_b+S_w-S_bM)^{-1}\hat{d_2}}{\hat{d_1}^\top(S_b+S_w)^{-1}\hat{d_1}+\hat{d_2}^\top(S_b+S_w)^{-1}\hat{d_2}}\\
-\frac{\hat{d_2}^\top M(S_b+S_w-S_bM)^{-1}\hat{d_1}}{\hat{d_1}^\top(S_b+S_w)^{-1}\hat{d_1}+\hat{d_2}^\top(S_b+S_w)^{-1}\hat{d_2}}
\end{multline}
where the constants of the densities have been expressed by $\eta_1,\eta_2$ in the left hand side of the equation. Defining $M' =(S_b+S_w)^{-1}S_b(S_b+S_w)^{-1}$ and $S=M[\mathbf{I}+M^2]^{-1}$ by using Eq.~\ref{lm:sum}:
\begin{multline}
(S_b+S_w-S_bM)^{-1} = SM'+(S_b+S_w)^{-1}
\end{multline}
Therefore:
\begin{multline} \label{eq:long}
\frac{log(d_i,\mu|H) - \eta_1}{log(d_i,\mu|I)- \eta_2} =1\\
+\frac{\hat{d_1}^\top SM'\hat{d_1}+\hat{d_2}^\top SM'\hat{d_2}-\hat{d_1}^\top MSM'\hat{d_2}-\hat{d_2}^\top MSM'\hat{d_1}}{\hat{d_1}^\top(S_b+S_w)^{-1}\hat{d_1}+\hat{d_2}^\top(S_b+S_w)^{-1}\hat{d_2}}\\
-\frac{2\hat{d_1}^\top M'\hat{d_2}}{\hat{d_1}^\top(S_b+S_w)^{-1}\hat{d_1}+\hat{d_2}^\top(S_b+S_w)^{-1}\hat{d_2}}
\end{multline}
Defining $\rho_i=\frac{\gamma_i}{1+\gamma_i}$ We notice from Lemma~\ref{lm:inverse} that $v_i'^\top M =\gamma_i v_i'^\top$, $v_i'^\top S=v_i'^\top M[\mathbf{I}+M^2]^{-1}=\frac{\rho_i}{1+\rho_i^2}v_i'^\top$ And so we can expand the first term in the numerator of Eq.~\ref{eq:long}:
\begin{multline}
\hat{d_1}^\top SM'\hat{d_1}=(\sum_{i=1}^{k}\alpha_i v_i')^\top SM'(\sum_{i=1}^{k}\alpha_i v_i')\\
=(\sum_{i=1}^{k}\alpha_i \frac{\rho_i}{1+\rho_i^2} v_i')^\top M'(\sum_{i=1}^{k}\alpha_i v_i')\\
=\sum_{i=1}^{k}\alpha_i \frac{\rho_i}{1+\rho_i^2} v_i'^\top (S_b+S_w)^{-1}S_b(S_b+S_w)^{-1}\sum_{i=1}^{k}\alpha_i v_i'
\end{multline}

Since $v_i'=(S_b+S_w)v_i,~~ \forall i\neq j~~v_i^\top S_b v_j=0$, and $\rho>0$ we get:
\begin{multline}
\hat{d_1}^\top SM'\hat{d_1}=(\sum_{i=1}^{k}\alpha_i \frac{\rho_i}{1+\rho_i^2} v_i)^\top S_b(\sum_{i=1}^{k}\alpha_i v_i)\\
=\sum_{i=1}^{k}\alpha_i^2 \frac{\rho_i}{1+\rho_i^2} v_i^\top S_b v_i\leq \sum_{i=1}^{k}\alpha_i^2 v_i^\top S_b v_i=d_1'^\top S_b d_1'
\end{multline}
Since it is also true that $\forall i\neq j~~v_i^\top (S_b+S_w) v_j=0$, similar manipulation can be done with the denominator:
\begin{multline}
\hat{d_1}^\top (S_b+S_w)^{-1}\hat{d_1}+\hat{d_2}^\top (S_b+S_w)^{-1}\hat{d_2}\\
=\sum_{i=1}^{k}\alpha_i  v_i^\top (S_b+S_w)\sum_{i=1}^{k}\alpha_i v_i\\
+\sum_{i=1}^{k}\beta_i  v_i^\top (S_b+S_w)\sum_{i=1}^{k}\beta v_i\\
=d_1'^\top (S_b+S_w)d_1' +d_2'^\top (S_b+S_w)d_2'
\end{multline}
And so:
\begin{multline}
0 < \frac{\hat{d_1}^\top SM'\hat{d_1}+\hat{d_2}^\top SM'\hat{d_2}}{\hat{d_1}^\top (S_b+S_w)^{-1}\hat{d_1}+\hat{d_2}^\top (S_b+S_w)^{-1}\hat{d_2}}\\
\leq \frac{d_1'^\top S_b d_1'+d_2'^\top S_b d_2'}{d_1'^\top (S_b+S_w)d_1' +d_2'^\top (S_b+S_w)d_2'}\leq 2T
\end{multline}

The last reasoning stems from the bound on fisher scores of $d_1$ and $d_2$ and the fact that if both $\frac{a_1}{a_2}$ and $\frac{b_1}{b_2}$ are smaller than $T$ and all terms are positive, then $\frac{a_1+b_1}{a_2+b_2}$ is smaller than both $\frac{2*\max(a_1,b_1)}{a_2}$ and $\frac{2*\max(a_1,b_1)}{b_2}$, and therefore $\frac{a_1+b_1}{a_2+b_2}<2T$.
The same manipulations to the rest of the terms in the numerator of Eq.~\ref{eq:long} and get the following bound:
\begin{multline}
\left| \frac{-\hat{d_1}^\top MSM'\hat{d_2}-\hat{d_2}^\top MSM'\hat{d_1}-2\hat{d_1}^\top M'\hat{d_2}}{\hat{d_1}^\top(S_b+S_w)^{-1}\hat{d_1}+\hat{d_2}^\top(S_b+S_w)^{-1}\hat{d_2}}\right|<4T~~,
\end{multline}
from which the theorem stems.
\end{proof}
Theorem ~\ref{the:jb} indicates that in the directions of low Fisher ratio the JB method cannot distinguish between the two competing hypotheses and determine whether the two samples $d_i$ and $d_{i'}$ belong to the same class.


We observed during experiments performed on a number of datasets, that  training of a CNN using a single cross entropy loss produces a representation that has a rapidly decreasing Fisher spectrum, and is highly discriminative in only a few directions. Reducing the representation dimension, i.e., using a bottleneck technique helps in reducing the total number of dimensions but does not seem to increase the number of discriminative dimensions. We next show that by optimizing for multiple orthogonal solutions, we promote more directions that have high Fisher scores. 

Since the hyperplanes $f_i^r$ learned during optimization are discriminative, we can expect most of these to have high Fisher ratios. The multiplicity created by the multiverse loss, leads to multiple orthogonal hyperplanes. Since the probabilities produced by the matching hyperplanes are identical, it is likely that all matching hyperplanes $f_j^r$, and $f_j^s$ have similar Fisher ratios. The theorem below shows that adding more $S_w$-orthogonal classifiers with high Fisher ratios increases the $L1$ norm of the Fisher spectrum.

\begin{theorem} \label{thr:mf}
Let $f^1...f^m$ be a set of $m$ classifiers that are $S_w$-orthogonal for data $D$ and labels $y$, and let $\gamma=[\gamma_1...\gamma_d]$ denote the Fisher spectrum. 
Given that $\forall 1\leq r \leq m$, for some value $\theta$, $\sigma(f^r,S_b,S_w)\geq \theta$,  it holds that $\sum_{k=1}^{d} \gamma_k \geq \sqrt{m}\theta$ .
\end{theorem}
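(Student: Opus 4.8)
The plan is to pass to the whitened coordinate system, in which $S_w$-orthogonality becomes ordinary orthogonality and the Fisher spectrum becomes an ordinary eigenvalue spectrum, exactly as anticipated by the remark preceding this section. Concretely, since $S_w$ is positive definite, set $B = S_w^{-1/2} S_b S_w^{-1/2}$. The matrix $B$ is similar to $S_w^{-1} S_b$, so its eigenvalues are precisely the generalized eigenvalues $\gamma_1,\dots,\gamma_d$ of $S_b v = \gamma S_w v$; moreover $S_b \succeq 0$ forces every $\gamma_k \ge 0$. Consequently $\sum_k \gamma_k = \operatorname{tr}(B)$ and $\sum_k \gamma_k^2 = \|B\|_F^2 = \operatorname{tr}(B^2)$.

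Next I would transform the classifiers. Put $g^r = S_w^{1/2} f^r$, so that $(f^r)^\top S_w f^s = (g^r)^\top g^s$; the $S_w$-orthogonality hypothesis then says the $g^1,\dots,g^m$ are pairwise orthogonal in $\mathbb{R}^d$ (hence linearly independent, which already forces $m \le d$). Because the Fisher ratio is scale-invariant, I may rescale each $f^r$ so that $\|g^r\| = 1$; then $\sigma(f^r,S_b,S_w) = \frac{(f^r)^\top S_b f^r}{(f^r)^\top S_w f^r} = (g^r)^\top B\, g^r \ge \theta$. So we have an orthonormal family $g^1,\dots,g^m$ with $(g^r)^\top B g^r \ge \theta$ for every $r$.

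The key step is a lower bound on the Frobenius norm of $B$. Extend $\{g^r\}_{r=1}^m$ to an orthonormal basis $e_1,\dots,e_d$ of $\mathbb{R}^d$ with $e_r = g^r$ for $r \le m$; then $\|B\|_F^2 = \sum_{r=1}^d \|B e_r\|^2 \ge \sum_{r=1}^m \|B g^r\|^2$, and Cauchy--Schwarz together with $\|g^r\| = 1$ gives $\|B g^r\| \ge (g^r)^\top B g^r \ge \theta$. Hence $\sum_k \gamma_k^2 = \|B\|_F^2 \ge m\theta^2$. Finally, since all $\gamma_k \ge 0$ we have $\big(\sum_k \gamma_k\big)^2 = \sum_k \gamma_k^2 + \sum_{j \ne k} \gamma_j \gamma_k \ge \sum_k \gamma_k^2 \ge m\theta^2$, and taking square roots yields $\sum_{k=1}^d \gamma_k \ge \sqrt{m}\,\theta$.

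There is no deep obstacle here; the work is essentially bookkeeping once the whitening is in place. The points that need care are (i) verifying that the eigenvalues of $B$ really are the Fisher spectrum and are nonnegative, which is what licenses passing from the $\ell_2$ bound $\sum_k \gamma_k^2 \ge m\theta^2$ to the $\ell_1$ bound, and (ii) observing that $m \le d$ is automatic. One remark worth noting is that the Frobenius/Cauchy--Schwarz route is slightly lossy: since $B \succeq 0$, the inequality $\sum_{r=1}^m (g^r)^\top B g^r \le \operatorname{tr}(B)$ for orthonormal $g^r$ (a Ky Fan type bound) would in fact give the stronger estimate $\sum_k \gamma_k \ge m\theta$, but the argument above already establishes the stated claim.
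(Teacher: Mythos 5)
Your proposal is correct and follows essentially the same route as the paper: whiten by $S_w^{1/2}$ so that the $S_w$-orthogonal classifiers become an orthonormal family, reduce the Fisher spectrum to the eigenvalues of $S_w^{-1/2}S_bS_w^{-1/2}$, derive the $\ell_2$ bound $\sum_k\gamma_k^2\geq m\theta^2$ via Cauchy--Schwarz (you do it termwise through $\|Bg^r\|$, the paper does it via the trace inequality $\operatorname{tr}(XY)\leq\sqrt{\operatorname{tr}(X^2)\operatorname{tr}(Y^2)}$ applied to the projection $\Delta^\top\Delta$), and then pass to the $\ell_1$ bound using nonnegativity of the spectrum. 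Your closing remark is also right: since $B\succeq 0$, the Ky Fan argument gives the stronger bound $\sum_k\gamma_k\geq m\theta$, which the paper's Frobenius-norm route also forgoes.
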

\begin{proof}
The Fisher spectrum $\gamma_1...\gamma_d$ is obtained from the eigenvalues of $R=S_b^{\frac{1}{2}}S_w^{-1}S_b^{\frac{1}{2}}$.
For each classifier $f^r$ we have:
\begin{equation}
\forall r,~~\frac{f^{r\top}S_b f^r}{f^{r\top}S_w f^r}\geq \theta
\end{equation}
Denoting $u^r = \frac{S_w^{\frac{1}{2}}f^r}{||S_w^{\frac{1}{2}}f^r||}$, we have:
\begin{equation}
\forall r,~~u^{r\top} S_w^{-\frac{1}{2}}S_b S_w^{-\frac{1}{2}}u^r = u^{r\top} \hat{R} u^r\geq \theta
\end{equation}
Denoting by $\hat{\gamma}=[\hat{\gamma_1}...\hat{\gamma_d}]$ and $w_1...w_d$ the eigenvalues and eigenvectors of $\hat{R}$, we notice that $\sum_{k=1}^{d}\hat{\gamma_k}=\sum_{k=1}^{d}\gamma_k$, since the matrix $\hat{R}$ is a cyclic permutation of $R=S_b^{\frac{1}{2}}S_w^{-1}S_b^{\frac{1}{2}}$, and hence have equal trace. The eigenvectors of $\hat{R}$ span a $d$ dimensional linear subspace, and so we can express each $u^r=\sum_{k=1}^{d}\alpha^r_k w_k, ||\alpha^r||_2=1$. From the $S_w$ orthogonality property of the solutions $f^1...f^m$, it follows that $\forall r\neq s, ~~u^{r\top} u^s = \alpha^{r\top}\alpha^s=0$  We therefor have:
\begin{equation}
\forall r,~~\sum_{k=1}^{d}\alpha^r_k w^{\top}_k \hat{R} \sum_{k=1}^{d}\alpha^r_k w_k = \sum_{k=1}^{d}(\alpha^r_k)^2\hat{\gamma_k}.
\end{equation}
In matrix form:
\begin{multline} \label{eq:mat}
Diag(\begin{pmatrix}
\alpha^1_1...\alpha^1_d  \\
\alpha^2_1...\alpha^2_d\\
\vdots  \\
\alpha^r_1...\alpha^r_d\\
\end{pmatrix}
\begin{pmatrix}
\hat{\gamma_1}~~~ 0~~~...0 \\
0 ~~~\hat{\gamma_2}~~~ 0...0\\
\vdots  \\
0~~~0~~~...~~~\hat{\gamma_d}\\
\end{pmatrix}
\begin{pmatrix}
\alpha^1_1...\alpha^1_d  \\
\alpha^2_1...\alpha^2_d\\
\vdots  \\
\alpha^r_1...\alpha^r_d\\
\end{pmatrix}^{\top}) \\
= Diag(\Delta \Gamma \Delta^{\top})\geq\begin{pmatrix}
\theta \\
\theta\\
\vdots  \\
\theta\\
\end{pmatrix}\\
\end{multline}
And hence:
\begin{equation} \label{eq:tr}
tr(\Delta \Gamma \Delta^{\top})=tr( \Delta^{\top}\Delta\Gamma)\geq m\theta
\end{equation}
We notice that $\Delta^{\top}\Delta$ is positive semi definite, and from the orthonormality of the vectors $a^1...a^m$ we have that $(\Delta^{\top}\Delta)^2=\Delta^{\top}\Delta$, $tr(\Delta^{\top}\Delta)=m$. The Cauchy-Schwartz inequality states that for any two positive semi definite matrices of the same size $X,Y$, it holds that $tr(XY)\leq \sqrt{tr(X^2)tr(Y^2)}$, and so we have:
\begin{multline}
m\theta\leq tr( \Delta^{\top}\Delta\Gamma)\leq \sqrt{tr((\Delta^{\top}\Delta)^2)tr(\Gamma^2)}\\
=\sqrt{tr(\Delta^{\top}\Delta)\sum_{i=1}^{d}\hat{\gamma_i}^2}=\sqrt{m}\sqrt{\sum_{i=1}^{d}\hat{\gamma_i}^2}\leq \sqrt{m}\sum_{i=1}^{d}\hat{\gamma_i}\\
=\sqrt{m}\sum_{i=1}^{d}\gamma_i
\end{multline}
And so finally:
\begin{equation}
m\theta \leq \sqrt{m}\sum_{i=1}^{d}\gamma_i \rightarrow \sum_{i=1}^{d}\gamma_i\geq \sqrt{m}\theta
\end{equation}
\end{proof}

In Thm.~\ref{thr:mf} we used the $S_w$ orthogonality of the solutions to guarantee the result, however it is not a necessary condition. From our experiment we noticed an improved Fisher spectrum when both $S_w$ and the standard orthogonality condition were used.

\section{Experiments}

In order to evaluate the effect of using the multiverse loss on performance, we have conducted experiments on two widely used datasets: CIFAR-100 and LFW. While the CIFAR-100 experiments are performed using a new transfer learning protocol, the LFW experiments provide a direct empirical comparison to a large body of previous work.

\subsection{Network architecture}

In our experiments, we employ two network architectures. For the CIFAR-100 experiments, we use the architecture of network in network~\cite{nin}; for the face recognition experiments, we use the scratch architecture~\cite{casia}. The networks were trained from scratch at each experiment, using the MatConvNet framework~\cite{vedaldi15matconvnet}.

Both networks are fully convolutoinal, and we added a hidden layer on top of the networks in order  to apply our method on top of a vector of activations. This modification is not strictly needed and was made for implementation convenience. This top layer was used as the representation. The architectures used are given, for completeness, in Tab.~\ref{tab:nin} and Tab.~\ref{tab:scratch} for the network in network and scratch networks respectively.

\begin{table}[t!]
\begin{center}
\begin{tabular}{|l|c|c|c|c|}
\hline
Layer           & Filter/Stride & \#Channel & \#Filter    \\ \hline
Conv11        & $5 \times 5 $ / 1         & 3           & 192                             \\ \hline
Conv12        & $1 \times 1 $ / 1         & 192          & 160                          \\ \hline
Conv13        & $1 \times 1$ / 1        & 160          & 96                               \\ \hline
Pool1        & $3 \times 3 $ / 2         & 96          & --                             \\ \hline
Dropout1-0.5       & --          & --          & --                             \\ \hline
Conv21        & $5 \times 5 $ / 1         & 96           & 192                             \\ \hline
Conv22        & $1 \times 1 $ / 1         & 192          & 192                          \\ \hline
Conv23        & $1 \times 1$ / 1        & 192          & 100                               \\ \hline
Pool2        & $3 \times 3 $ / 2         & 192   
& -- 
\\ \hline
Dropout1-0.5       & --          & --          & --                             \\ \hline
Conv31        & $3 \times 3 $ / 1         & 192           & 192                            \\ \hline
Conv32        & $1 \times 1 $ / 1         & 192          & 192                           \\ \hline
Conv33        & $1 \times 1$ / 1        & 192          & 100                              \\ \hline
Avg Pool        & $7 \times 7 $ / 1     & 100  
&--
\\ \hline
FC        & $1 \times 100$ / 1        & 100          & 100                              \\ \hline
\end{tabular}
\end{center}
\caption{The modified NIN~\cite{nin} model used in the CIFAR-100 experiments. The network starts with a color input image of size $3 \times 32 \times 32$ pixels, and runs through 3 convolutional blocks interleaved with ReLU and max pooling layers. Following a spatial average pooling at the end of the process, a representation of size 100 is obtained. A FC layer of size 100 was added to the architecture for reasons of implementation convenience.}
\label{tab:nin}
\end{table}

\begin{table}[t!]
\begin{center}
\begin{tabular}{|l|c|c|c|c|}
\hline
Layer           & Filter/Stride & \#Channel & \#Filter \\ \hline
Conv11        & $3 \times 3 $ / 1         & 1           & 32                             \\ \hline
Conv12        & $3 \times 3 $ / 1         & 32          & 64                           \\ \hline
Max Pool        & $2 \times 2$ / 2        & 64          & --                               \\ \hline
Conv21        & $3 \times 3 $ / 1         & 64          & 64                              \\ \hline
Conv22        & $3 \times 3 $ / 1         & 64          & 128                             \\ \hline
Max Pool        & $2 \times 2$ / 2        & 128         & --                \\ \hline
Conv31        & $3 \times 3 $ / 1         & 128         & 96                              \\ \hline
Conv32        & $3 \times 3 $ / 1         & 96          & 192                            \\ \hline
Max Pool        & $2 \times 2$ / 2        & 192         & --                             \\ \hline
Conv41        & $3 \times 3 $ / 1         & 192         & 128                            \\ \hline
Conv42        & $3 \times 3 $ / 1         & 128         & 256                           \\ \hline
Max Pool        & $2 \times 2$ / 2        & 256         & --                                \\ \hline
Conv51        & $3 \times 3 $ / 1         & 256         & 160                             \\ \hline
Conv52        & $3 \times 3 $ / 1         & 160         & 320                             \\ \hline
Avg Pool        & $6 \times 6$ / 1        & 320         & --                             \\ \hline
Dropout1-0.3       & --          & --          & --                             \\ \hline
FC        & $1 \times 320$ / 1        & 320          & 100                              \\ \hline
\end{tabular}
\end{center}
\caption{The scratch model by the authors of~\cite{casia}, which is the face recognition network in our experiments. The network starts with a gray scale input image of size $1 \times 100 \times 100$ pixels, and runs through 10 convolutional layers interleaved ReLU and max pooling layers. Following a spatial average pooling at the end of the process, a representation of size 320 is obtained. A FC layer of size 320 was added to the scratch architecture for reasons of implementation convenience.}
\label{tab:scratch}
\end{table}

\subsection{Results}
The CIFAR-100~\cite{cifar10or100} contains 50,000 $32 \times 32$ color images, split between 100 categories. The images were extracted from the tiny image collection~\cite{eightymil}. Throughout our experiments, the first 90 classes (class ids 0 to 89) are used as the source domain, and the last 10 as the target domain.


Our experiments compare six architectures: a baseline with one cross entropy loss (``M1''); four multiverse architectures with 2--5 such losses (``M2--M5''); and an ensemble of five networks with a single cross entropy loss each. The last method was added to demonstrate that our method's benefit is greater than that of combining multiple networks. Note, however, that when compounding losses, the overall network architecture resembles that of a single network and is almost as efficient to train and deploy as the baseline network. One can easily create ensembles of networks with multiverse losses, as we do for the LFW benchmark.

We report the methods' performance in multiple ways. The validation error reports the error rate obtained, in the source domain of 90 classes, on the 10\% of the data reserved for this purpose. In the target domain, two metrics are used: same/not-same accuracy using either the cosine distance or the JB method. Note that the cosine distance is unsupervised, and that we train the JB on the validation set of the source domain. Hence, no training was done in the target domain. For the same/not-same evaluation, 3000 matching and 3000 non-matching pairs were randomly sampled from the 10 classes of the target domain.

As can be seen in Tab.~\ref{tab:cifar100_table}, the multiverse method outperform the baseline and the ensemble methods on the target domain, in each of the accuracy metrics. It is also evident that adding more cross entropy losses improves performance. The preferable separation between the classes is also depicted visually in Fig.~\ref{fig:tsne}, where the 2D embedding of the baseline (M1) representation is compared to that obtained using the M5 multiverse method. For the purpose of this visualization, the TSNE~\cite{tsne} embedding method is used. 

\begin{table}[t]
\begin{center}
\begin{tabular}{|l|c|c|c|}
\hline
Domain & Source  & \multicolumn{2}{c|}{Target (transfer)} \\
\hline
Metric          & Val error          & Cosine    & JB \\
\hline
M1          & 0.340              & 0.789             & 0.800                          \\
M2          & 0.340            & 0.791            & 0.804                         \\
M2 ($S_w$-orthogonal)         & 0.344            & 0.798             & 0.803                         \\
M3          & 0.345            & 0.801             & 0.812                   \\
M3 ($S_w$-orthogonal)    & 0.346            & 0.799             & 0.811                   \\
M4          & 0.351            & 0.807             & 0.82                      \\
M4 ($S_w$-orthogonal)    & 0.353            & 0.808             & 0.823                      \\
M5          & 0.360           & 0.812             & 0.833                      \\
M5 ($S_w$-orthogonal)    & 0.362            & 0.811             & 0.831                      \\
M6     & 0.369            & 0.816             & 0.838                      \\
M6 ($S_w$-orthogonal)    & 0.371            & 0.816             & 0.834                      \\
M7     & 0.375            & 0.815             & 0.831                      \\
M7 ($S_w$-orthogonal)    & 0.377            & 0.816             & 0.830                      \\
\hline
\hline
Ensemble of 5 times M1  & NA & 0.803 & 0.82  \\
\hline
\end{tabular}
\end{center}
\caption{CIFAR-100 Results. Multiverse networks of multiplicity 1--7 are shown, for both types of orthogonality. Also shown is the result obtained by an ensemble of 5 conventional networks. The numbers indicate either the validation error or the same/not-same accuracy in the target domain.}
\label{tab:cifar100_table}
\end{table}

\begin{figure}
\centering
\begin{tabular}{c}
\includegraphics[width=.85\linewidth]{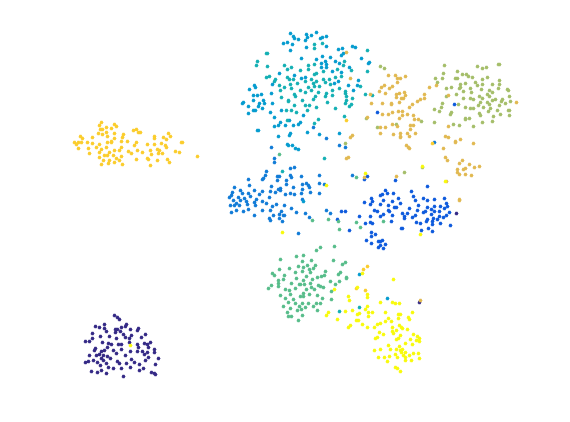}\\
(a)\\
\includegraphics[width=.85\linewidth]{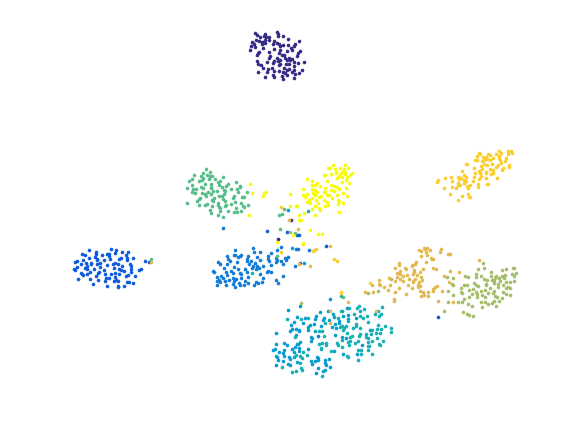}\\
(b)\\
\end{tabular}
\caption{2D embedding (TSNE~\cite{tsne}) of the representation of (a) conventional cross entropy and (b) multiverse (M5). The 10 target classes of the CIFAR-100 experiment are shown.}
\label{fig:tsne}
\end{figure}

\begin{table*}[t]
\begin{center}
\begin{tabular}{|l|c|c|c|c|c|}
\hline
Domain & Source & \multicolumn{3}{c|}{Target (transfer)} \\
\hline
Metric          & Val error          & Cosine    & JB on source & JB on LFW splits       \\
\hline 
CASIA trained M1          & 0.07              & 0.962  $\pm$ 0.0032         & 0.966 $\pm$ 0.0022&  0.970   $\pm$ 0.0016     \\
CASIA trained M1 (2)      & 0.07              & 0.962  $\pm$ 0.0021        & 0.966 $\pm$ 0.0019& 0.971  $\pm$ 0.0022     \\
CASIA trained M1 (3)      & 0.07              & 0.961 $\pm$ 0.0022           & 0.966 $\pm$ 0.0013&  0.971  $\pm$  0.0015    \\
Ensemble of 3 CASIA M1 & & 0.968 $\pm$ 0.0019& 0.972 $\pm$ 0.0021& 0.975 $\pm$ 0.0025\\
\hline
CASIA trained M2          & 0.08           & 0.970 $\pm$ 0.0021           & 0.974  $\pm$ 0.0017 &  0.976 $\pm$ 0.0016       \\
CASIA trained M3          & 0.11            & 0.972  $\pm$ 0.0012          & 0.977 $\pm$ 0.0015 &  0.980  $\pm$  0.0034     \\
CASIA trained M3 (2)           & 0.11            & 0.971  $\pm$ 0.0031          & 0.977 $\pm$ 0.0028 &  0.979 $\pm$ 0.0027       \\
CASIA trained M5 (1)           & 0.12            & 0.973 $\pm$ 0.0011           & 0.978 $\pm$ 0.0014 &0.981 $\pm$ 0.0019       \\
CASIA trained M5 (2)           & 0.12            & 0.972  $\pm$ 0.0015         & 0.977 $\pm$ 0.0019 &  0.980  $\pm$ 0.0031      \\
\hline
3rd party DB, M5           & 0.12            & 0.982  $\pm$ 0.0034          & 0.982 $\pm$ 0.0031 &  0.988 $\pm$ 0.0035       \\
\hline
Two network ensemble        &             & 0.985 $\pm$ 0.0029            & 0.990 $\pm$ 0.0027 &  0.991 $\pm$ 0.0027       \\

\hline
\end{tabular}
\end{center}
\caption{Face recognition results. Shown are the validation error on CASIA, and transfer results on LFW. The cosine similarity as well as learned JB similarities are shown. The JB was either trained on CASIA or on the LFW training splits. The LFW results confirm with the unrestricted mode and report mean and Standard Error of the accuracy obtained for the ten cross validation splits.}
\label{tab:casia_table}
\end{table*}

\begin{table*}[t]
\begin{center}
\begin{tabular}{|l|c|c|c|l|}
\hline
Method                                          & Single network  & Ensemble result & \#nets & Training dataset     \\
\hline
M5                                              &      0.9814 $\pm$ 0.0019            & --              &                                   & CASIA~\cite{casia}                \\
M5, 3rd party DB                                & 0.9883 $\pm$ 0.0035 & 0.9905 + 0.0027 & 2                                 & proprietary 800k images  \\
DeepFace~\cite{deepface}                      & 0.9700 $\pm$ 0.0087   & 0.9735 $\pm$ 0.0025 & 7                           & proprietary, 4M images              \\
DeepID~\cite{deepid1}                         & --              & 0.9745 $\pm$ 0.0026 & 25                           & proprietary,160k                    \\
Original scratch~\cite{casia}               & 0.9773 $\pm$ 0.0031 & --              & 1                            & CASIA~\cite{casia}                \\
Web-Scale Training~\cite{Taigman_2015_CVPR} & 0.9800   & 0.9843    & 4                            & proprietary, 500M images            \\
MSU TR~\cite{jain}                            & 0.9745 $\pm$ 0.0099 & 0.9823 $\pm$ 0.0068 & 7                       & CASIA~\cite{casia}                \\
MMDFR~\cite{c67}                              & 0.9843 $\pm$ 0.0020 & 0.9902 $\pm$ 0.0019 & 8                        & proprietary,500k                    \\
DeepID2~\cite{deepid2}                        & 0.9633          & 0.9915 $\pm$ 0.0013 & 25                    & proprietary,160k                    \\
DeepID2+~\cite{deepid2plus}                   & 0.9870            & 0.9947 $\pm$ 0.0012 & 25                     & proprietary,290k                    \\
FaceNet~\cite{facenet}                            & 0.9887 $\pm$ 0.0015 & 0.9963 $\pm$ 0.0009 & 8                   & proprietary, 200M                   \\
\hline
FR+FCN~\cite{c45}(*)                             & --              & 
0.9645 $\pm$ 0.0025 & 5                        & CelebFaces~\cite{celebfaces}, 88k \\
betaface.com(*)                                    & --              & 0.9808 $\pm$ 0.0016 & NA                      & NA                                  \\
Uni-Ubi(*)                                         & --              & 0.9900 $\pm$ 0.0032 & NA                       & NA                                  \\
Face++~\cite{c40}(*)                             & --              & 0.9950 $\pm$ 0.0036 & 4                       & proprietary, 5M face images         \\
DeepID3~\cite{deepid3}(*)                        & --              & 0.9953 $\pm$ 0.0010 & 25                    & proprietary,300k                    \\
Tencent-BestImage(*)                               & --              & 0.9965 $\pm$ 0.0025 & 20                      & proprietary, 1M face images         \\
Baidu~\cite{baidu}(*)                            & --              & 0.9977 $\pm$ 0.0006 & 10                      & proprietary, 1.2M face images       \\
AuthenMetric(*)                       & --              & 0.9977 $\pm$ 0.0009 & 25                       & proprietary, 500k face images \\     
\hline
\end{tabular}
\end{center}
\caption{Comparison to state of the art results on LFW. Out of all the methods trained on CASIA, we present the best performance. We also present the best result for a single network, with the exception of FaceNet, which was trained on a dataset which is a hundred times larger than ours. A star (*) indicates commercial systems whose claimed results were not peer reviewed.}
\label{tab:lfwresults}

\end{table*}
\begin{figure*}[!t]
\begin{center}
\begin{tabular}{ccc}
\includegraphics[width=.2879\linewidth]{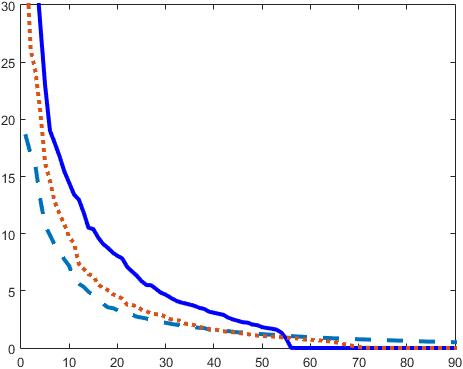}&
\includegraphics[width=.2879\linewidth]{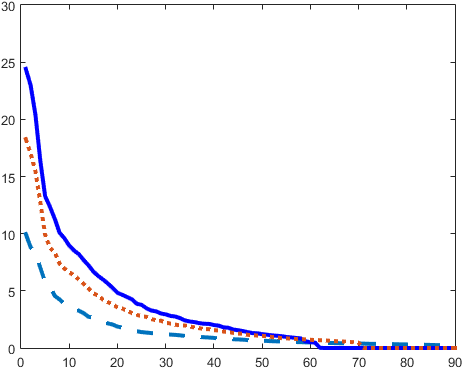}&
\includegraphics[width=.2879\linewidth]{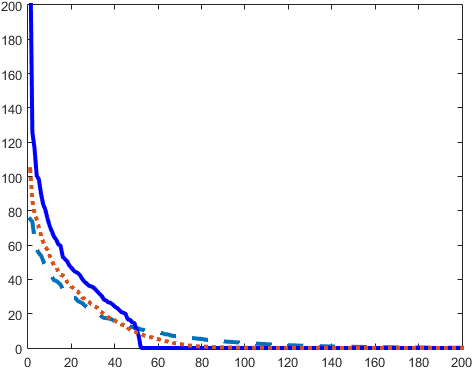}\\
(a) & (b) & (c)\\
\includegraphics[width=.2879\linewidth]{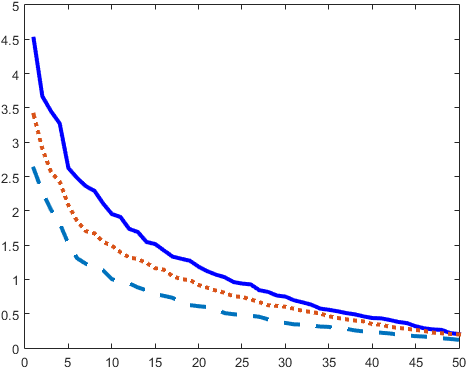}&
\includegraphics[width=.2879\linewidth]{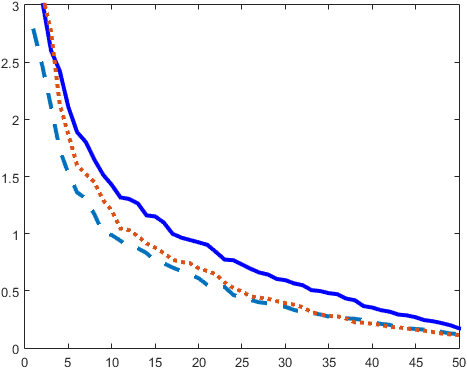}&
\includegraphics[width=.2879\linewidth]{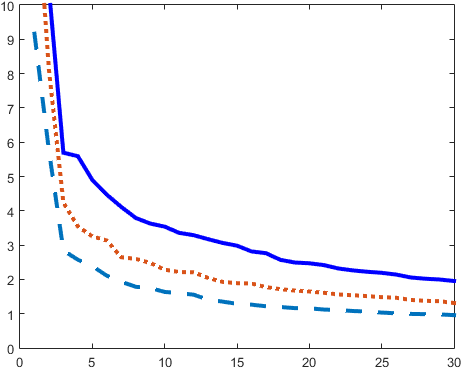}\\
(d) & (e) & (f)\\
\end{tabular}
\end{center}
\caption{Various spectrums obtained in the target domain. Each plot shows singular values (first row) or generalized eigenvalues (second row), sorted separately for each of three methods. Solid blue is the result obtained for M5. The dotted red line is the M3 result, and the baseline M1 is shown as dashed green. (a-c) For CIFAR-100 employing conventional orthogonality, CIFAR-100 $S_w$ orthogonality, and LFW, respectively, the singular values of the kernel matrix $K=DD^\top$ are shown. The multiverse loss leads to higher values until it drops to zero earlier than the conventional spectrum. (d-f) The Fisher spectrum, i.e., the generalized eigenvalues of $S_w$ and $S_b$ for the same three datasets: (d) CIFAR conventional orthogonality, (e) CIFAR $S_w$ orthogonality and (f) LFW. As a result of applying the multiverse method, there is an increase in the magnitude of the eigenvalues.}
\label{fig:spectrums}
\end{figure*}

As mentioned above, for the face recognition experiments, we use the scratch model~\cite{casia}. 
The networks are trained on the CASIA dataset~\cite{casia}; LFW dataset~\cite{lfw} is used as the target domain. 

Models are evaluated in the source domain by measuring the classification accuracy on the CASIA dataset, which we split to 90\% training and 10\% validation. For the target domain, the LFW benchmark in the unrestricted mode~\cite{newlfw} is used (we do not use person ID from LFW, but do use the IDs of the CASIA dataset). The LFW results are mean and Standard Error estimated over the fixed ten cross-validation splits. JB is either trained on the CASIA validation split or on the LFW dataset itself in a cross validation manner.

In the LFW experiments, we performed the M1 (baseline), M3, and M5 experiments multiple times, in order to show the stability of the results and to support ensembles. The $S_w$-orthogonality multiverse method, which is slower to train, was not tested on LFW by the submission date. As can be seen in Tab.~\ref{tab:casia_table}, the multiverse loss outperforms, in the target domain, the baseline method and also outperforms the ensemble of multiple baseline networks. This is true for the cosine similarity, as well as for the two JB experiments. Interestingly, multiverse  does not show an advantage in the source domain (this does not weaken our claims).

In face recognition, the effect of the training dataset sometimes overshadows that of the method. We, therefore, employed a proprietary 800k images 3rd party dataset, which does not intersect the identities of the LFW dataset. In comparison to CASIA's 500k images, the 3rd party dataset is slightly larger and contains fewer tagging mistakes. As can be seen in Tab.~\ref{tab:casia_table}, this leads to an improvement in performance. By combining two networks (i) the M5 network trained on this outside dataset and (ii) the M5 network trained on CASIA, we are able to further improve results.

The results we obtained are compared in Tab.~\ref{tab:lfwresults} to the state of the art as reported on the LFW webpage on the date of the submission. Our results, which use a fairly simple fully convolutional architecture outperform all CASIA trained networks. In addition, the ranking obtained for a single network outperforms all results, except one result~\cite{facenet}, which was obtained using 200 million images. 

In addition to performance, we also examined the effect of the multiverse loss on the properties of the representation. Fig.~\ref{fig:spectrums} demonstrate the singular values of the data representation in the transfer domain on (a) CIFAR-100 using conventional orthogonality, (b) CIFAR-100 with $S_w$-orthogonality, and (c) LFW. As can be seen, the multiverse network (M5) has larger singular values. However, these drop to zero abruptly whereas the spectrum of the baseline representation continues to decay gradually. As a result, the representation of our method is of a lower dimension, and is more balanced among the dimensions. Fig.~\ref{fig:spectrums} (d), (e), and (f) show the generalized eigenvalues of $S_b$ and $S_w$ in the target domain. As can be seen, the multiverse method promotes larger Fisher ratios. 

The sharp drop in the data dimensionality that is promoted by the multiverse method leads to very compact representations. The dimensionality of our best single network (M5, 3rd party dataset), is only 51 (Fig.~\ref{tab:lfwresults}(c)). This is a very compact representation, which is much lower than any other state of the art network. 

\section{Conclusions}

This work presented the emergence of surprising and desirable properties of the representation layer of a deep neural network when learning multiple orthogonal solutions. The practical implications of our work are far reaching since the suggested method is easy to incorporate into almost any architecture.\

{\small
\bibliographystyle{ieee}
\bibliography{cvpr2016,transferable}
}

\end{document}